\providecommand{\tabularnewline}{\\}
\theoremstyle{plain}
\newtheorem{thm}{\protect\theoremname}
\theoremstyle{definition}
\newtheorem{defn}[thm]{\protect\definitionname}
\theoremstyle{plain}
\newtheorem{prop}[thm]{\protect\propositionname}
\newenvironment{proof}[1][\protect\proofname]{\par
\normalfont\topsep6\p@\@plus6\p@\relax
\trivlist
\itemindent\parindent
\item[\hskip\labelsep
\scshape
#1]\ignorespaces
}{%
\endtrivlist\@endpefalse
}
\providecommand{\proofname}{Proof}
\theoremstyle{plain}
\newtheorem{lem}[thm]{\protect\lemmaname}
\providecommand{\definitionname}{Definition}
\providecommand{\lemmaname}{Lemma}
\providecommand{\propositionname}{Proposition}
\providecommand{\theoremname}{Theorem}
\begin{document}

\twocolumn[
\icmltitle{Bayesian Nonparametric Multilevel Clustering with Group-Level Contexts}
\icmlauthor{Vu Nguyen$^1$}{tvnguye@deakin.edu.au} 
\icmlauthor{Dinh Phung$^1$}{dinh.phung@deakin.edu.au} 
\icmlauthor{XuanLong Nguyen$^2$}{xuanlong@umich.edu} 
\icmlauthor{Svetha Venkatesh$^1$}{svetha.venkatesh@deakin.edu.au} 
\icmlauthor{Hung Hai Bui$^3$}{bui.h.hung@gmail.com}
\icmladdress{$^1$Center for Pattern Recognition and Data Analytics (PRaDA), Deakin University, Australia}
\vspace{-0.06in}
\icmladdress{$^2$Department of Statistics, University of Michigan, Ann Arbor, USA}
\vspace{-0.06in}
\icmladdress{$^3$Laboratory for Natural Language Understanding, Nuance Communications, Sunnyvale, USA}
\icmlkeywords{Bayesian nonparametric, Dirichlet process}
\vskip 0.3in
]
\begin{abstract}
We present a Bayesian nonparametric framework for multilevel clustering
which utilizes group-level context information to simultaneously discover
low-dimensional structures of the group contents and partitions groups
into clusters. Using the Dirichlet process as the building block,
our model constructs a product base-measure with a nested structure
to accommodate content and context observations at multiple levels.
The proposed model possesses properties that link the nested Dirichlet
processes (nDP) and the Dirichlet process mixture models (DPM) in
an interesting way: integrating out all contents results in the DPM
over contexts, whereas integrating out group-specific contexts results
in the nDP mixture over content variables. We provide a Polya-urn
view of the model and an efficient collapsed Gibbs inference procedure.
Extensive experiments on real-world datasets demonstrate the advantage
of utilizing context information via our model in both text and image
domains.
\end{abstract}
\newcommand{\sidenote}[1]{\marginpar{\small \emph{\color{Medium}#1}}}

\global\long\def\se{\hat{\text{se}}}

\global\long\def\interior{\text{int}}

\global\long\def\boundary{\text{bd}}

\global\long\def\ML{\textsf{ML}}

\global\long\def\GML{\mathsf{GML}}

\global\long\def\HMM{\mathsf{HMM}}

\global\long\def\support{\text{supp}}

\global\long\def\new{\text{*}}

\global\long\def\stir{\text{Stirl}}

\global\long\def\mA{\mathcal{A}}

\global\long\def\mB{\mathcal{B}}

\global\long\def\mF{\mathcal{F}}

\global\long\def\mK{\mathcal{K}}

\global\long\def\mH{\mathcal{H}}

\global\long\def\mX{\mathcal{X}}

\global\long\def\mZ{\mathcal{Z}}

\global\long\def\mS{\mathcal{S}}

\global\long\def\Ical{\mathcal{I}}

\global\long\def\mT{\mathcal{T}}

\global\long\def\Pcal{\mathcal{P}}

\global\long\def\dist{d}

\global\long\def\HX{\entro\left(X\right)}
 \global\long\def\entropyX{\HX}

\global\long\def\HY{\entro\left(Y\right)}
 \global\long\def\entropyY{\HY}

\global\long\def\HXY{\entro\left(X,Y\right)}
 \global\long\def\entropyXY{\HXY}

\global\long\def\mutualXY{\mutual\left(X;Y\right)}
 \global\long\def\mutinfoXY{\mutualXY}

\global\long\def\given{\mid}

\global\long\def\gv{\given}

\global\long\def\goto{\rightarrow}

\global\long\def\asgoto{\stackrel{a.s.}{\longrightarrow}}

\global\long\def\pgoto{\stackrel{p}{\longrightarrow}}

\global\long\def\dgoto{\stackrel{d}{\longrightarrow}}

\global\long\def\lik{\mathcal{L}}

\global\long\def\logll{\mathit{l}}

\global\long\def\vectorize#1{\mathbf{#1}}

\global\long\def\vt#1{\mathbf{#1}}

\global\long\def\gvt#1{\boldsymbol{#1}}

\global\long\def\idp{\ \bot\negthickspace\negthickspace\bot\ }
 \global\long\def\cdp{\idp}

\global\long\def\das{\triangleq}

\global\long\def\id{\mathbb{I}}

\global\long\def\idarg#1#2{\id\left\{  #1,#2\right\}  }

\global\long\def\iid{\stackrel{\text{iid}}{\sim}}

\global\long\def\bzero{\vt 0}

\global\long\def\bone{\mathbf{1}}

\global\long\def\boldm{\boldsymbol{m}}

\global\long\def\bff{\vt f}

\global\long\def\bx{\boldsymbol{x}}

\global\long\def\bl{\boldsymbol{l}}

\global\long\def\bu{\boldsymbol{u}}

\global\long\def\bo{\boldsymbol{o}}

\global\long\def\bh{\boldsymbol{h}}

\global\long\def\bs{\boldsymbol{s}}

\global\long\def\bz{\boldsymbol{z}}

\global\long\def\xnew{y}

\global\long\def\bxnew{\boldsymbol{y}}

\global\long\def\bX{\boldsymbol{X}}

\global\long\def\tbx{\tilde{\bx}}

\global\long\def\by{\boldsymbol{y}}

\global\long\def\bY{\boldsymbol{Y}}

\global\long\def\bZ{\boldsymbol{Z}}

\global\long\def\bU{\boldsymbol{U}}

\global\long\def\bv{\boldsymbol{v}}

\global\long\def\bn{\boldsymbol{n}}

\global\long\def\bV{\boldsymbol{V}}

\global\long\def\bI{\boldsymbol{I}}

\global\long\def\bw{\vt w}

\global\long\def\balpha{\gvt{\alpha}}

\global\long\def\bbeta{\gvt{\beta}}

\global\long\def\bmu{\gvt{\mu}}

\global\long\def\btheta{\boldsymbol{\theta}}

\global\long\def\blambda{\boldsymbol{\lambda}}

\global\long\def\bgamma{\boldsymbol{\gamma}}

\global\long\def\bpsi{\boldsymbol{\psi}}

\global\long\def\bphi{\boldsymbol{\phi}}

\global\long\def\bpi{\boldsymbol{\pi}}

\global\long\def\bomega{\boldsymbol{\omega}}

\global\long\def\bepsilon{\boldsymbol{\epsilon}}

\global\long\def\btau{\boldsymbol{\tau}}

\global\long\def\bvarphi{\boldsymbol{\varphi}}

\global\long\def\realset{\mathbb{R}}

\global\long\def\realn{\realset^{n}}

\global\long\def\integerset{\mathbb{Z}}

\global\long\def\natset{\integerset}

\global\long\def\integer{\integerset}

\global\long\def\natn{\natset^{n}}

\global\long\def\rational{\mathbb{Q}}

\global\long\def\rationaln{\rational^{n}}

\global\long\def\complexset{\mathbb{C}}

\global\long\def\comp{\complexset}

\global\long\def\compl#1{#1^{\text{c}}}

\global\long\def\and{\cap}

\global\long\def\compn{\comp^{n}}

\global\long\def\comb#1#2{\left({#1\atop #2}\right) }

\global\long\def\nchoosek#1#2{\left({#1\atop #2}\right)}

\global\long\def\param{\vt w}

\global\long\def\Param{\Theta}

\global\long\def\meanparam{\gvt{\mu}}

\global\long\def\Meanparam{\mathcal{M}}

\global\long\def\meanmap{\mathbf{m}}

\global\long\def\logpart{A}

\global\long\def\simplex{\Delta}

\global\long\def\simplexn{\simplex^{n}}

\global\long\def\dirproc{\text{DP}}

\global\long\def\ggproc{\text{GG}}

\global\long\def\DP{\text{DP}}

\global\long\def\ndp{\text{nDP}}

\global\long\def\hdp{\text{HDP}}

\global\long\def\gempdf{\text{GEM}}

\global\long\def\rfs{\text{RFS}}

\global\long\def\bernrfs{\text{BernoulliRFS}}

\global\long\def\poissrfs{\text{PoissonRFS}}

\global\long\def\grad{\gradient}
 \global\long\def\gradient{\nabla}

\global\long\def\partdev#1#2{\partialdev{#1}{#2}}
 \global\long\def\partialdev#1#2{\frac{\partial#1}{\partial#2}}

\global\long\def\partddev#1#2{\partialdevdev{#1}{#2}}
 \global\long\def\partialdevdev#1#2{\frac{\partial^{2}#1}{\partial#2\partial#2^{\top}}}

\global\long\def\closure{\text{cl}}

\global\long\def\cpr#1#2{\Pr\left(#1\ |\ #2\right)}

\global\long\def\var{\text{Var}}

\global\long\def\Var#1{\text{Var}\left[#1\right]}

\global\long\def\cov{\text{Cov}}

\global\long\def\Cov#1{\cov\left[ #1 \right]}

\global\long\def\COV#1#2{\underset{#2}{\cov}\left[ #1 \right]}

\global\long\def\corr{\text{Corr}}

\global\long\def\sst{\text{T}}

\global\long\def\SST{\sst}

\global\long\def\ess{\mathbb{E}}

\global\long\def\Ess#1{\ess\left[#1\right]}

\newcommandx\ESS[2][usedefault, addprefix=\global, 1=]{\underset{#2}{\ess}\left[#1\right]}

\global\long\def\fisher{\mathcal{F}}

\global\long\def\bfield{\mathcal{B}}
 \global\long\def\borel{\mathcal{B}}

\global\long\def\bernpdf{\text{Bernoulli}}

\global\long\def\betapdf{\text{Beta}}

\global\long\def\dirpdf{\text{Dir}}

\global\long\def\gammapdf{\text{Gamma}}

\global\long\def\gaussden#1#2{\text{Normal}\left(#1, #2 \right) }

\global\long\def\gauss{\mathbf{N}}

\global\long\def\gausspdf#1#2#3{\text{Normal}\left( #1 \lcabra{#2, #3}\right) }

\global\long\def\multpdf{\text{Mult}}

\global\long\def\poiss{\text{Pois}}

\global\long\def\poissonpdf{\text{Poisson}}

\global\long\def\pgpdf{\text{PG}}

\global\long\def\wshpdf{\text{Wish}}

\global\long\def\iwshpdf{\text{InvWish}}

\global\long\def\nwpdf{\text{NW}}

\global\long\def\niwpdf{\text{NIW}}

\global\long\def\studentpdf{\text{Student}}

\global\long\def\unipdf{\text{Uni}}

\global\long\def\transp#1{\transpose{#1}}
 \global\long\def\transpose#1{#1^{\mathsf{T}}}

\global\long\def\mgt{\succ}

\global\long\def\mge{\succeq}

\global\long\def\idenmat{\mathbf{I}}

\global\long\def\trace{\mathrm{tr}}

\global\long\def\argmax#1{\underset{_{#1}}{\text{argmax}} }

\global\long\def\argmin#1{\underset{_{#1}}{\text{argmin}\ } }

\global\long\def\diag{\text{diag}}

\global\long\def\norm{}

\global\long\def\spn{\text{span}}

\global\long\def\vtspace{\mathcal{V}}

\global\long\def\field{\mathcal{F}}
 \global\long\def\ffield{\mathcal{F}}

\global\long\def\inner#1#2{\left\langle #1,#2\right\rangle }
 \global\long\def\iprod#1#2{\inner{#1}{#2}}

\global\long\def\dprod#1#2{#1 \cdot#2}

\global\long\def\norm#1{\left\Vert #1\right\Vert }

\global\long\def\entro{\mathbb{H}}

\global\long\def\entropy{\mathbb{H}}

\global\long\def\Entro#1{\entro\left[#1\right]}

\global\long\def\Entropy#1{\Entro{#1}}

\global\long\def\mutinfo{\mathbb{I}}

\global\long\def\relH{\mathit{D}}

\global\long\def\reldiv#1#2{\relH\left(#1||#2\right)}

\global\long\def\KL{KL}

\global\long\def\KLdiv#1#2{\KL\left(#1\parallel#2\right)}
 \global\long\def\KLdivergence#1#2{\KL\left(#1\ \parallel\ #2\right)}

\global\long\def\crossH{\mathcal{C}}
 \global\long\def\crossentropy{\mathcal{C}}

\global\long\def\crossHxy#1#2{\crossentropy\left(#1\parallel#2\right)}

\global\long\def\breg{\text{BD}}

\global\long\def\lcabra#1{\left|#1\right.}

\global\long\def\lbra#1{\lcabra{#1}}

\global\long\def\rcabra#1{\left.#1\right|}

\global\long\def\rbra#1{\rcabra{#1}}

\global\long\def\modelname{\text{MC}\raisebox{3pt}{\tiny2}}

\global\long\def\DPM{\text{DPM}}

\global\long\def\nDP{\text{nDP}}

\global\long\def\nDPM{\text{nDPM}}

\section{Introduction\label{sec:intro}}

In many situations, content data naturally present themselves in groups,
e.g., students are grouped into classes, classes grouped into schools,
words grouped into documents, etc. Furthermore, each content group
can be associated with additional context information (teachers of
the class, authors of the document, time and location stamps). Dealing
with grouped data, a setting known as \emph{multilevel analysis} \cite{hox2010multilevel,diez2000multilevel},
has diverse application domains ranging from document modeling \cite{Blei_etal_03}
to public health \cite{leyland2001multilevel}. 

This paper considers specifically the multilevel clustering problem
in multilevel analysis: to jointly cluster both the content data and
their groups when there is group-level context information. By \emph{context,}
we mean a secondary data source attached to the group of primary \emph{content}
data. An example is the problem of clustering documents, where each
document is a group of words associated with group-level context information
such as time-stamps, list of authors, etc. Another example is image
clustering where visual image features (e.g. SIFT) are the\emph{ }content
and image tags are the context.

To cluster groups together, it is often necessary to perform dimensionality
reduction of the content data by forming content topics, effectively
performing clustering of the content as well. For example, in document
clustering, using bag-of-words directly as features is often problematic
due to the large vocabulary size and the sparsity of the in-document
word occurrences. Thus, a typical approach is to first apply dimensionality
reduction techniques such as LDA \cite{Blei_etal_03} or HDP \cite{Teh_etal_06hierarchical}
to find word topics (i.e., distributions on words), then perform document
clustering using the word topics and the document-level context information
as features. In such a cascaded approach, the dimensionality reduction
step (e.g., topic modeling) is not able to utilize the context information.
This limitation suggests that a better alternative is to perform context-aware
document clustering and topic modeling jointly. With a joint model,
one can expect to obtain improved document clusters as well as context-guided
content topics that are more predictive of the data.

Recent work has attempted to jointly capture word topics and document
clusters. Parametric approaches \cite{xieintegrating} are extensions
of the LDA \cite{Blei_etal_03} and require specifying the number
of topics and clusters in advance. Bayesian nonparametric approaches
including the nested Dirichlet process (nDP) \cite{Rodriguez_etal_08nested}
and the multi-level clustering hierarchical Dirichlet Process (MLC-HDP)
\cite{wulsin_2012_hierarchical} can automatically adjust the number
of clusters. We note that none of these methods can utilize context
data.

This paper propose the \emph{Multilevel Clustering with Context} ($\modelname$),
a Bayesian nonparametric model to jointly cluster both content and
groups while fully utilizing group-level context. Using the Dirichlet
process as the building block, our model constructs a product base-measure
with a nested structure to accommodate both content and context observations.
The $\modelname$ model possesses properties that link the nested
Dirichlet process (nDP) and the Dirichlet process mixture model (DPM)
in an interesting way: integrating out all contents results in the
DPM over contexts, whereas integrating out group-level context results
in the nDP mixture over content variables. For inference, we provide
an efficient collapsed Gibbs sampling procedure for the model.

The advantages of our model are: (1) the model automatically discovers
the (unspecified) number of groups clusters and the number of topics
while fully utilizing the context information; (2) content topic modeling
is informed by group-level context information, leading to more predictive
content topics; (3) the model is robust to partially missing context
information. In our experiments, we demonstrate that our proposed
model achieves better document clustering performances and more predictive
word topics in real-world datasets in both text and image domains.

\section{Related Background\label{sec:relatedworks}}

There have been extensive works on clustering documents in the literature.
Due to limited scope of the paper, we only describe works closely
related to probabilistic topic models. We note that standard topic
models such as LDA \cite{Blei_etal_03} or its nonparametric Bayesian
counter part, HDP \cite{Teh_etal_06hierarchical} exploits the group
structure for word clustering. However these models do not cluster
documents.

An approach to document clustering is to employ a two-stage process.
First, topic models (e.g. LDA or HDP) are applied to extract the topics
and their mixture proportion for each document. Then, this is used
as feature input to another clustering algorithm. Some examples of
this approach include the use of LDA+Kmeans for image clustering \cite{xuan2011finding,elango2005clustering}
and HDP+Affinity Propagation for clustering human activities \cite{nguyen_phung_gupta_venkatesh_percom13}.

A more elegant approach is to simultaneously cluster documents and
discover topics. The first Bayesian nonparametric model proposed for
this task is the nested Dirichlet Process (nDP) \cite{Rodriguez_etal_08nested}
where documents in a cluster share the same distribution over topic
atoms. Although the original nDP does not force the topic atoms to
be shared across document clusters, this can be achieved by simply
introducing a DP prior for the nDP base measure. The same observation
was also made by \cite{wulsin_2012_hierarchical} who introduced the
MLC-HDP, a 3-level extension to the nDP. This model thus can cluster
words, documents and document-corpora with shared topic atoms throughout
the group hierarchy. Xie et al \cite{xieintegrating} recently introduced
the Multi-Grain Clustering Topic Model which allows mixing between
global topics and document-cluster topics. However, this is a parametric
model which requires fixing the number of topics in advance. More
crucially, all of these existing models do not attempt to utilize
group-level context information.

\subsection*{Modelling with Dirichlet Process}

We provide a brief account of the Dirichlet process and its variants.
The literature on DP is vast and we refer to \cite{Hjort_etal_bk10_bayesian}
for a comprehensive account. Here we focus on DPM, HDP and nDP which
are related to our work.

\emph{Dirichlet process} \cite{Ferguson_73bayesian} is a basic building
block in Bayesian nonparametrics. Let $\left(\Theta,\borel,H\right)$
be a probability measure space, and $\gamma$ is a positive number,
a Dirichlet process $\dirproc\left(\gamma,H\right)$ is a distribution
over discrete random probability measure $G$ on $\left(\Theta,\borel\right)$.
Sethuraman \cite{Sethuraman_94constructive} provides an alternative
constructive definition which makes the discreteness property of a
draw from a Dirichlet process explicit via the stick-breaking representation:
$G=\sum_{k=1}^{\infty}\beta_{k}\delta_{\phi_{k}}$ where $\phi_{k}\iid H,k=1,\ldots,\infty$
and $\bbeta=\left(\beta_{k}\right)_{k=1}^{\infty}$ are the weights
constructed through a `stick-breaking' process $\beta_{k}=v_{k}\prod_{s<k}\left(1-v_{s}\right)$
with $v_{k}\iid\betapdf\left(1,\gamma\right)$. It can be shown that
$\sum_{k=1}^{\infty}\beta_{k}=1$ with probability one, and as a convention
\cite{Pitman_02poisson}, we hereafter write $\bbeta\sim\gempdf\left(\gamma\right)$. 

Due to its discrete nature, Dirichlet process has been widely used
in Bayesian mixture models as the prior distribution on the mixing
measures, each is associated with an atom $\phi_{k}$ in the stick-breaking
representation of $G$ above. A likelihood kernel $F\left(\cdot\right)$
is used to generate data  $x_{i}\gv\phi_{k}\iid F\left(\cdot\gv\phi_{k}\right)$,
resulting in a model known as the \emph{Dirichlet process mixture
model} (DPM), pioneered by the work of \cite{Antoniak_74mixtures}
and subsequently developed by many others. In section \ref{sec:model}
we provide a precise definition for DPM.

While DPM models exchangeable data within a \emph{single} group, the
Dirichlet process can also be constructed hierarchically to provide
prior distributions over \emph{multiple} exchangeable groups. Under
this setting, each group is modelled as a DPM and these models are
`linked' together to reflect the dependency among them --  a formalism
which is generally known as dependent Dirichlet processes \cite{MacEachern_99dependent}.
One particular attractive approach is the \emph{hierarchical Dirichlet
processes} \cite{Teh_etal_06hierarchical} which posits the dependency
among the group-level DPM by another Dirichlet process, i.e., $G_{j}\mid\alpha,G_{0}\sim\dirproc\left(\alpha,G_{0}\right)$
and $G_{0}\gv\gamma,H\sim\dirproc\left(\gamma,H\right)$ where $G_{j}$
is the prior for the $j$-th group, linked together via a discrete
measure $G_{0}$ whose distribution is another DP.

Yet another way of using DP to model multiple groups is to construct
random measure in a nested structure in which the DP base measure
is itself another DP. This formalism is the \emph{nested Dirichlet
Process} \cite{Rodriguez_etal_08nested}, specifically $G_{j}\iid U$
where $U\sim\DP\left(\alpha\times\DP\left(\gamma H\right)\right)$.
Modeling $G_{j}$ (s) hierarchically as in HDP and nestedly as in
nDP yields different effects. HDP focuses on exploiting statistical
strength across groups via sharing atoms $\phi_{k}$ (s), but it does
not partition groups into clusters. This statement is made precisely
by noting that $P\left(G_{j}=G_{j'}\right)=0$ in HDP. Whereas, nDP
emphasizes on inducing clusters on both observations and distributions,
hence it partitions groups into clusters. To be precise, the prior
probability of two groups being clustered together is $P\left(G_{j}=G_{j'}\right)=\frac{1}{a+1}$.
Finally we note that this original definition of nDP in \cite{Rodriguez_etal_08nested}
does not force the atoms to be shared across clusters of groups, but
this can be achieved by simply introducing a DP prior for the nDP
base measure, a modification that we use in this paper. This is made
clearly in our definition for nDP mixture in section \ref{sec:model}.

\section{Multilevel Clustering with Contexts\label{sec:model}}

\subsection{Model description and stick-breaking}

Consider data presented in a two-level group structure as follows.
Denote by $J$ the number of groups; each group $j$ contains $N_{j}$
exchangeable data points, represented by $\bw_{j}=\left\{ w_{j1},w_{j2},\ldots,w_{jN_{j}}\right\} $.
For each group $j$, the group-specific context data is denoted by
$x_{j}$. Assuming that the groups are exchangeable, the overall data
is $\left\{ (x_{j},\bw_{j})\right\} _{j=1}^{J}$. The collection $\left\{ \bw_{1},\ldots,\bw_{J}\right\} $
represents observations of the group contents, and $\left\{ x_{1},\ldots,x_{J}\right\} $
represents observations of the group-level contexts.

We now describe the generative process of \modelname ~that generates
a two-level clustering of this data. We use a group-level DP mixture
to generate an infinite cluster model for groups. Each group cluster
$k$ is associated with an atom having the form of a pair $(\phi_{k},Q_{k}^{*})$
where $\phi_{k}$ is a parameter that generates the group-level contexts
within the cluster and $Q_{k}^{*}$ is a measure that generates the
group contents within the same cluster. 

To generate atomic pairs of context parameter and measure-valued content
parameter, we introduce a product base-measure of the form $H\times\DP(vQ_{0})$
for the group-level DP mixture. Drawing from a DP mixture with this
base measure, each realization is a pair $(\theta_{j},Q_{j})$; $\theta_{j}$
is then used to generate the context $x_{j}$ and $Q_{j}$ is used
to repeatedly produce the set of content observations $w_{ji}$ within
the group $j$. Specifically,
\begin{align}
U\sim\dirproc\left(\alpha(H\times\DP(vQ_{0}))\right) & \,\text{where }Q_{0}\sim\dirproc\left(\eta S\right)\nonumber \\
(\theta_{j},Q_{j})\iid U & \ \text{for each group\ }j\label{eq:MC2}\\
x_{j}\sim F(.\vert\theta_{j}),\quad\varphi_{ji} & \iid Q_{j},\quad w_{ji}\sim Y\left(.\vert\varphi_{ji}\right)\nonumber 
\end{align}
In the above, $H$ and $S$ are respectively base measures for context
and content parameters $\theta_{j}$ and $\varphi_{ji}$. The context
and content observations are then generated via the likelihood kernels
$F\left(\cdot\gv\theta_{j}\right)$ and $Y\left(\cdot\gv\varphi_{ji}\right)$.
To simplify inference, $H$ and $S$ are assumed to be conjugate to
$F$ and $Y$ respectively. The generative process is illustrated
in Figure \ref{fig:J3}.

\begin{figure}
\begin{centering}
\includegraphics[scale=0.25]{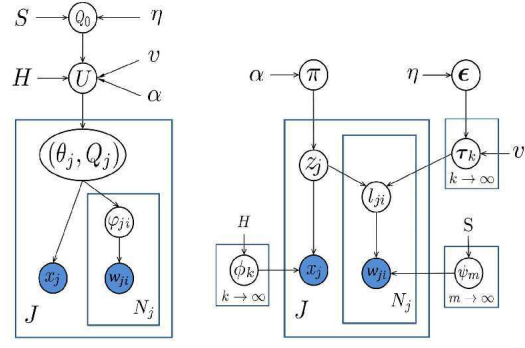}\vspace{-0.1in}

\par\end{centering}

\caption{Graphical model representation for the proposed model. Right figure
illustrates a stick breaking representation.\label{fig:J3}}
\end{figure}

\subsubsection*{Stick-breaking representation}

We now derive the stick-breaking construction for \modelname$\ $
$\ $where all the random discrete measures are specified by a distribution
over integers and a countable set of atoms. The random measure $U$
in Eq. (\ref{eq:MC2}) has the stick-breaking form:\vspace{-0.05in}
\begin{align}
U & =\sum_{k=1}^{\infty}\pi_{k}\delta_{\left(\phi_{k},Q_{k}^{*}\right)}\label{eq:}
\end{align}
where $\pi\sim\gempdf\left(\alpha\right)$ and $\left(\phi_{k},Q_{k}^{*}\right)\iid H\times\dirproc\left(vQ_{0}\right)$.
Equivalently, this means $\phi_{k}$ is drawn i.i.d. from $H$ and
$Q_{k}^{*}$ drawn i.i.d. from $\dirproc\left(vQ_{0}\right)$. Since
$Q_{0}\sim\dirproc\left(\eta S\right)$, $Q_{0}$ and $Q_{k}^{*}$
have the standard HDP \cite{Teh_etal_06hierarchical} stick-breaking
forms: $Q_{0}=\sum_{m=1}^{\infty}\epsilon_{m}\delta_{\psi_{m}}$where
$\bepsilon\sim\textrm{GEM}(\eta)$, $\psi_{m}\iid S$; $Q_{k}^{*}=\sum_{m=1}^{\infty}\tau_{k,m}\delta_{\psi_{m}}$
where $\btau_{k}=\left(\tau_{k1},\tau_{k2},\ldots\right)\sim\dirproc\left(v,\bepsilon\right)$.

For each group $j$ we sample the parameter pair $\left(\theta_{j},Q_{j}\right)\iid U$;
equivalently, this means drawing $z_{j}\iid\pi$ and letting $\theta_{j}=\phi_{z_{j}}$
and $Q_{j}=Q_{z_{j}}^{*}$. For the $i$-th content data within the
group $j$, the content parameter $\varphi_{ji}$ is drawn $\iid Q_{j}=Q_{z_{j}}^{*}$;
equivalently, this means drawing $l_{ji}\iid\tau_{z_{j}}$ and letting
$\varphi_{ji}=\psi_{l_{ji}}$. Figure \ref{fig:J3} presents the graphical
model of this stick-breaking representation.

\subsection{Inference and Polya Urn View}

We use collapsed Gibbs sampling, integrating out $\phi_{k}$(s), $\psi_{m}$(s),
$\pi$ and $\tau_{k}$ (s). Latent variables $\bz$, $\bl$, $\bepsilon$
and the hyper-parameters $\alpha$, $v$, $\eta$ will be resampled.
We only describe the key inference steps in sampling $\bz,\bl$ and
$\bepsilon$ here and refer to Appendix \ref{sub:Model-Inference-Derivations}
for the rest of the details (including how to sample the hyper-parameters).

\textbf{Sampling} $\bz$. The required conditional distribution is
$p(z_{j}=k\gv\bz_{-j},\bl,\bx,\alpha,H)\propto$
\begin{align*}
 & p\left(z_{j}=k|\bz_{-j},\alpha\right)p\left(x_{j}|z_{j}=k,\bz_{-j},\bx_{-j},H\right)\\
 & \times p\left(l_{j*}|z_{j}=k,\bl_{-j*},\bz_{-j},\bepsilon,v\right)
\end{align*}
The first term can be recognized as a form of the Chinese restaurant
process (CRP). The second term is the predictive likelihood for the
context observations under the component $\phi_{k}$ after integrating
out $\phi_{k}$. This can be evaluated analytically due to conjugacy
of $F$ and $H$. The last term is the predictive likelihood for the
group content-index $l_{j*}=\left\{ l_{ji}\vert i=1\ldots N_{j}\right\} $.
Since $l_{ji}\gv z_{j}=k\iid\multpdf\left(\btau_{k}\right)$ where
$\btau_{k}\sim\dirpdf\left(v\epsilon_{1},\ldots,v\epsilon_{M},\epsilon_{\text{new}}\right)$,
the last term can also be evaluated analytically by integrating out
$\btau_{k}$ using the Multinomial-Dirichlet conjugacy property.

\textbf{Sampling} $\bl$. Let $w_{-ji}$ be the same set as $\bw$
excluding $w_{ji}$, let $w_{-ji}(m)=\left\{ w_{j'i'}\vert(j'i')\neq(ji)\wedge l_{j'i'}=m\right\} $
and $\bl_{-ij}(k)=\left\{ l_{j'i'}\vert(j'i')\neq(ji)\wedge z_{j'}=k\right\} $.
Then $p\left(l_{ji}=m\mid l_{-ji},z_{j}=k,z_{-j},v,\bw,\bepsilon,S\right)\propto$
\begin{align*}
 & p(w_{ji}\vert\bl,w_{-ji},S)\ p(l_{ji}=m\vert\bl_{-ji},z_{j}=k,z_{-j},\bepsilon,v)\\
= & p\left(w_{ji}\mid w_{-ji}(m),S\right)p\left(l_{ji}=m\mid\bl_{-ji}(k),\bepsilon,v\right)
\end{align*}

The first term is the predictive likelihood under mixture component
$\psi_{m}$ after integrating out $\psi_{m}$, which can be evaluated
analytically due to the conjugacy of $Y$ and $S$. The second term
is in the form of a CRP similar to the one that arises during inference
for HDP \cite{Teh_etal_06hierarchical}. 

\textbf{Sampling} $\bepsilon$. Sampling $\bepsilon$ requires information
from both $\bz$ and $\bl$.
\begin{align}
p\left(\epsilon\mid\bl,\bz,v,\eta\right) & \propto p\left(\bl\mid\bepsilon,v,\bz,\eta\right)\times p\left(\bepsilon\mid\eta\right)\label{eq:sampling epsilon-1-1}
\end{align}
Using a similar strategy in HDP, we introduce auxiliary variables
$\left(o_{km}\right)$, then alternatively sample together with $\bepsilon$:
\begin{align*}
p\left(o_{km}=h\gv\cdot\right) & \propto\textrm{Stirl}\left(h,n_{km}\right)(v\epsilon_{m})^{h},\, h=0,1,\ldots,n_{km}\\
p\left(\bepsilon\mid\cdot\right) & \propto\epsilon_{\textrm{new}}^{\eta-1}\prod_{m=1}^{M}\epsilon_{m}^{\sum_{k}o_{km}-1}
\end{align*}
where $\textrm{Stirl}\left(h,n_{km}\right)$ is the Stirling number
of the first kind, $n_{km}$ is the count of seeing the pair $\left(z_{j}=k,l_{ji}=m\right):\forall i,j$,
and finally $M$ is the current number of active content topics. It
clear that $o_{km}$ can be sampled from a Multinomial distribution
and $\epsilon$ from an $\left(M+1\right)$-dim Dirichlet distribution.

\subsubsection*{Polya Urn View}

Our model exhibits a Polya-urn view using the analogy of a fleet of
buses, driving customers to restaurants. Each bus represents a group
and customers on the bus are data points within the group. For each
bus $j$, $z_{j}$ acts as the index to the restaurant for its destination.
Thus, buses form clusters at their destination restaurants according
to a CRP: a new bus drives to an existing restaurant with the probability
proportional to the number of other buses that have arrived at that
restaurant, and with probability proportional to $\alpha$, it goes
to a completely new restaurant. 

Once all the buses have delivered customers to the restaurants, \emph{all
customers at the restaurants start to behave in the same manner as
in a Chinese restaurant franchise (CRF) process}: customers are assigned
tables according to a restaurant-specific CRP; tables are assigned
with dishes $\psi_{m}$ (representing the content topic atoms) according
to a global franchise CRP. In addition to the usual CRF, at restaurant
$k$, a single dessert $\phi_{k}$ (which represents the context-generating
atom, drawing $\iid$ from $H$) will be served to all the customers
at that restaurant. Thus, every customer\emph{ }on the same bus\emph{
$j$ }will be served the same\emph{ }dessert \emph{$\phi_{z_{j}}$}.
We observe three sub-CRPs, corresponding to the three DP(s) in our
model: the CRP at the dish level is due to the $\dirproc\left(\eta S\right)$,
the CRP forming tables inside each restaurant is due to the $\DP(vQ_{0})$,
and the CRP aggregating buses to restaurants is due to the $\dirproc\left(\alpha(H\times\DP(vQ_{0}))\right)$.

\subsection{Marginalization property}

We study marginalization property for our model when either the content
topics $\varphi_{ji}$ (s) or context topics $\theta_{j}$ (s) are
marginalized out. Our main result is established in Theorem \ref{proof_Theorem4}
where we show an interesting link to nested DP and DPM via our model.

Let $H$ be a measure over some measurable spaces $(\Theta,\Sigma)$.
Let $\mathbb{P}$ be the set of all measures over $(\Theta,\Sigma)$,
suitably endowed with some $\sigma$-algebra. Let $G\sim\DP(\alpha H)$
and $\theta_{i}\iid G$. The collection $(\theta_{i})$ then follows
the DP mixture distribution which is defined formally below.
\begin{defn}
(DPM)\label{def:DPM} A DPM is a probability measure over $\Theta^{n}\ni\left(\theta_{1},\ldots,\theta_{n}\right)$
with the usual product sigma algebra $\Sigma^{n}$ such that for every
collection of measurable sets $\left\{ \left(S_{1},\ldots,S_{n}\right):S_{i}\in\Sigma,i=1,\ldots,n\right\} $:
\begin{align*}
 & \DPM(\theta_{1}\in S_{1},\ldots,\theta_{n}\in S_{n}\vert\alpha,H)\\
 & \qquad=\int\prod_{i=1}^{n}G\left(S_{i}\right)\DP\left(dG\gv\alpha H\right)
\end{align*}
\vspace{-0.2in}

\end{defn}
We now state a result regarding marginalization of draws from a DP
mixture with a joint base measure. Consider two measurable spaces
$(\Theta_{1},\Sigma_{1})$ and $(\Theta_{2},\Sigma_{2})$ and let
$(\Theta,\Sigma)$ be their product space where $\Theta=\Theta_{1}\times\Theta_{2}$
and $\Sigma=\Sigma_{1}\times\Sigma_{2}$. Let $H^{*}$ be a measure
over the product space $\Theta=\Theta_{1}\times\Theta_{2}$ and let
$H_{1}$ be the marginal of $H^{*}$ over $\Theta_{1}$ in the sense
that for any measurable set $A\in\Sigma_{1}$, $H_{1}\left(A\right)=H^{*}\left(A\times\Theta_{2}\right)$.
Then drawing $(\theta_{i}^{(1)},\theta_{i}^{(2)})$ from a DP mixture
with base measure $\alpha H$ and marginalizing out $(\theta_{i}^{(2)})$
is the same as drawing $(\theta_{i}^{(1)})$ from a DP mixture with
base measure $H_{1}$. Formally
\begin{prop}
\label{thm:marginalization_theorem}Denote by $\theta_{i}$ the pair
\textup{$\left(\theta_{i}^{\left(1\right)},\theta_{i}^{\left(2\right)}\right)$,
there holds} 
\begin{align*}
 & \DPM\left(\theta_{1}^{\left(1\right)}\in S_{1},\ldots,\theta_{n}^{\left(1\right)}\in S_{n}\gv\alpha H_{1}\right)\\
 & \quad=\DPM\left(\theta_{1}\in S_{1}\times\Theta_{2},\ldots,\theta_{n}\in S_{n}\times\Theta_{2}\gv\alpha H^{*}\right)
\end{align*}
for every collection of measurable sets $\left\{ \left(S_{1},\ldots,S_{n}\right):S_{i}\in\Sigma_{1},i=1,\ldots,n\right\} $. \end{prop}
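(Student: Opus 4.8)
The plan is to unfold both sides of the claimed identity using the definition of the DPM (Definition~\ref{def:DPM}) and then reduce everything to a single marginalization property of the underlying Dirichlet process: that the projection of a DP draw onto the first coordinate is again a DP draw, but with the marginal base measure. By Definition~\ref{def:DPM}, the left-hand side equals $\int\prod_{i=1}^{n}G\left(S_{i}\right)\DP\left(dG\gv\alpha H_{1}\right)$ where $G$ ranges over measures on $\Theta_{1}$, while the right-hand side equals $\int\prod_{i=1}^{n}G^{*}\left(S_{i}\times\Theta_{2}\right)\DP\left(dG^{*}\gv\alpha H^{*}\right)$ where $G^{*}$ ranges over measures on the product space $\Theta$. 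The first thing I would observe is that for each $i$ the quantity $G^{*}\left(S_{i}\times\Theta_{2}\right)$ depends on $G^{*}$ only through its $\Theta_{1}$-marginal: writing $G_{1}^{*}$ for the measure on $\Sigma_{1}$ given by $G_{1}^{*}(A)=G^{*}(A\times\Theta_{2})$, we have $G^{*}\left(S_{i}\times\Theta_{2}\right)=G_{1}^{*}(S_{i})$, so the right-hand integrand is a function of $G_{1}^{*}$ alone.

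The crux is then to show that if $G^{*}\sim\DP(\alpha H^{*})$ then its marginal $G_{1}^{*}$ is distributed as $\DP(\alpha H_{1})$. I would establish this through the finite-partition characterization of the Dirichlet process. Take any finite measurable partition $(A_{1},\ldots,A_{k})$ of $\Theta_{1}$; then $(A_{1}\times\Theta_{2},\ldots,A_{k}\times\Theta_{2})$ is a finite measurable partition of $\Theta$, so the defining property of $G^{*}\sim\DP(\alpha H^{*})$ gives that $\left(G^{*}(A_{1}\times\Theta_{2}),\ldots,G^{*}(A_{k}\times\Theta_{2})\right)$ follows $\dirpdf\left(\alpha H^{*}(A_{1}\times\Theta_{2}),\ldots,\alpha H^{*}(A_{k}\times\Theta_{2})\right)$. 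Using the marginal relations $H^{*}(A_{j}\times\Theta_{2})=H_{1}(A_{j})$ and $G^{*}(A_{j}\times\Theta_{2})=G_{1}^{*}(A_{j})$, this is exactly the statement that $\left(G_{1}^{*}(A_{1}),\ldots,G_{1}^{*}(A_{k})\right)$ follows $\dirpdf\left(\alpha H_{1}(A_{1}),\ldots,\alpha H_{1}(A_{k})\right)$. Since the partition was arbitrary, this is precisely the defining property of $G_{1}^{*}\sim\DP(\alpha H_{1})$.

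With the pushforward fact in hand, I would finish by a change-of-variables argument. The marginalization map $G^{*}\mapsto G_{1}^{*}$ sends the law $\DP(\alpha H^{*})$ on $\mathbb{P}(\Theta)$ to the law $\DP(\alpha H_{1})$ on $\mathbb{P}(\Theta_{1})$, and the right-hand integrand depends on $G^{*}$ only through $G_{1}^{*}$; hence by the image-measure formula $\int\prod_{i}G_{1}^{*}(S_{i})\,\DP(dG^{*}\gv\alpha H^{*})=\int\prod_{i}G_{1}(S_{i})\,\DP(dG_{1}\gv\alpha H_{1})$, which is exactly the left-hand side. I expect the main obstacle to be the rigorous justification of this pushforward step rather than the final substitution: one must check that the coordinate-marginalization is a measurable map into the space $\mathbb{P}$ of measures (so that the image measure is well defined) and that the finite-partition characterization transfers cleanly to the induced measure. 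Once the distributional identity $G_{1}^{*}\sim\DP(\alpha H_{1})$ is secured, the remaining substitution is routine.
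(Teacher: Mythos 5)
Your argument is correct, but it is packaged differently from the paper's. The paper never asserts the distributional identity $G_{1}^{*}\sim\DP(\alpha H_{1})$; instead it first proves Lemma \ref{lem:ess_indp}, which says only that $\int\prod_{i}G(S_{i})\,\DP(dG\gv\alpha H)$ is a fixed function $E_{n}(\alpha,H(A_{1}),\ldots,H(A_{n^{*}}))$ of the base measure evaluated on the partition $A_{1},\ldots,A_{n^{*}}$ generated by $S_{1},\ldots,S_{n}$, and then checks that both sides of Proposition \ref{Proof_Prop2} are this same function at the same arguments: the cylinder sets $B_{i}=A_{i}\times\Theta_{2}$ form the partition generated by $T_{i}=S_{i}\times\Theta_{2}$, and $H_{1}(A_{i})=H^{*}(A_{i}\times\Theta_{2})=H^{*}(B_{i})$. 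You instead prove the stronger pushforward statement --- the $\Theta_{1}$-marginal of a $\DP(\alpha H^{*})$ draw is itself distributed as $\DP(\alpha H_{1})$ --- and conclude by the image-measure formula. The computational kernel is identical in both routes (lift a finite partition of $\Theta_{1}$ to a cylinder partition of $\Theta$ and match the finite Dirichlet distributions), but yours buys a reusable, conceptually sharper fact at the cost of the two technical points you honestly flag. Both are fillable with standard material: measurability of $G^{*}\mapsto G_{1}^{*}$ holds because the $\sigma$-algebra on $\mathbb{P}$ is generated by the evaluation maps and each coordinate $G_{1}^{*}(A)=G^{*}(A\times\Theta_{2})$ is itself an evaluation of $G^{*}$; and the passage from Dirichlet finite-partition marginals to the full DP law is precisely Ferguson's defining characterization (alternatively, the pushforward fact follows in one line from the stick-breaking representation, since projecting the atoms $(\phi_{k}^{(1)},\phi_{k}^{(2)})\iid H^{*}$ onto the first coordinate gives $\phi_{k}^{(1)}\iid H_{1}$ with unchanged weights). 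The paper's argument is deliberately more minimal: by comparing only the specific expectations that appear in the DPM definition, it never makes any claim about the law of the induced random measure and so sidesteps the measurability and uniqueness housekeeping entirely.
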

\begin{proof}
see Appendix \ref{Proof_Prop2}.
\end{proof}
Next we give a formal definition for the nDP mixture: $\varphi_{ji}\iid Q_{j}$,
$Q_{j}\iid U$, $U\sim\DP(\alpha\DP(vQ_{0}))$, $Q_{0}\sim DP(\eta S)$.
\begin{defn}
\label{def:nDPM}(nested DP Mixture) An $\nDPM$ is a probability
measure over $\Theta^{\sum_{j=1}^{J}N_{j}}\ni\left(\varphi_{11},\ldots,\varphi_{1N_{1}},\ldots,\varphi_{JN_{J}}\right)$
equipped with the usual product sigma algebra $\Sigma^{N_{1}}\times\ldots\times\Sigma^{N_{J}}$
such that for every collection of measurable sets $\left\{ \left(S_{ji}\right):S_{ji}\in\Sigma,j=1,\ldots,J,\, i=1\ldots,N_{j}\right\} $:
\begin{align*}
 & \nDPM(\varphi_{ji}\in S_{ji},\forall i,j\vert\alpha,v,\eta,S)\\
 & \,\,\,\,=\int\int\left\{ \prod_{j=1}^{J}\int\prod_{i=1}^{N_{j}}Q_{j}\left(S_{ji}\right)U\left(dQ_{j}\right)\right\} \\
 & \,\,\,\,\,\,\,\,\,\times\DP\left(dU\gv\alpha\DP\left(vQ_{0}\right)\right)\DP\left(dQ_{0}\gv\eta,S\right)
\end{align*}

We now have the sufficient formalism to state the marginalization
result for our model.\end{defn}
\begin{thm}
\label{prop:main}Given $\alpha,H$ and $\alpha,v,\eta,S$, let $\btheta=\left(\theta_{j}:\forall j\right)$
and $\bvarphi=\left(\varphi_{ji}:\forall j,i\right)$ be generated
as in Eq (\ref{eq:MC2}). Then, marginalizing out $\bvarphi$ results
in $\DPM\left(\btheta\gv\alpha,H\right)$, whereas marginalizing out
$\btheta$ results in $\nDPM\left(\bvarphi\vert\alpha,v,\eta,S\right)$.\end{thm}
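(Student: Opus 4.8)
The plan is to derive both marginals as corollaries of Proposition \ref{thm:marginalization_theorem} applied coordinatewise to the product base measure $H^{*}=H\times\DP(vQ_{0})$, handling the fact that $H^{*}$ is itself random (through $Q_{0}\sim\DP(\eta S)$) by conditioning on $Q_{0}$ and integrating it out only at the end. Throughout I would identify each draw $(\theta_{j},Q_{j})\iid U$ with a pair whose first coordinate lives in $(\Theta,\Sigma)$ and whose second coordinate $Q_{j}$ lives in the measure space $\mathbb{P}$; the context $x_{j}$ and the content variables $\varphi_{ji}$ are generated downstream from $\theta_{j}$ and $Q_{j}$ respectively, so $\btheta$ is a measurable function of the first coordinates alone and $\bvarphi$ depends on the second coordinates alone. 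This decoupling is what lets each half reduce to a single coordinate projection.

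\emph{Marginalizing out $\bvarphi$.} First I would condition on $Q_{0}$, so that $U\sim\DP(\alpha H^{*})$ with $H^{*}=H\times\DP(vQ_{0})$ a fixed product measure, and note that $\bvarphi$ are downstream of the second coordinates $Q_{j}$; integrating them out therefore reduces the claim to computing the marginal law of $\btheta=(\theta_{j})$. The marginal of $H^{*}$ over $\Theta$ is $H_{1}(A)=H^{*}(A\times\mathbb{P})=H(A)$, since $\DP(vQ_{0})(\mathbb{P})=1$. Proposition \ref{thm:marginalization_theorem} then gives $\btheta\sim\DPM(\cdot\gv\alpha,H_{1})=\DPM(\cdot\gv\alpha,H)$. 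Because this law does not involve $Q_{0}$, integrating over $Q_{0}\sim\DP(\eta S)$ leaves it unchanged, which is the first assertion.

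\emph{Marginalizing out $\btheta$.} Here I would again condition on $Q_{0}$ and apply Proposition \ref{thm:marginalization_theorem} with the two coordinates interchanged: the marginal of $H^{*}$ over the second factor is $H_{2}(B)=H^{*}(\Theta\times B)=\DP(vQ_{0})(B)$, so marginalizing out the first coordinates $\btheta$ yields $(Q_{j})\sim\DPM(\cdot\gv\alpha,\DP(vQ_{0}))$. By Definition \ref{def:DPM}, taking the ground space to be $\mathbb{P}$, this is exactly the law of $Q_{j}\iid\tilde U$ with $\tilde U\sim\DP(\alpha\DP(vQ_{0}))$; drawing $\varphi_{ji}\iid Q_{j}$ on top then reproduces, conditionally on $Q_{0}$, the inner integral $\prod_{j}\int\prod_{i}Q_{j}(S_{ji})\,U(dQ_{j})$ appearing in Definition \ref{def:nDPM}. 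Finally, carrying the outer integral over $Q_{0}\sim\DP(\eta S)$ through matches the $\DP(dQ_{0}\gv\eta,S)$ factor in the nDPM definition, giving $\bvarphi\sim\nDPM(\cdot\gv\alpha,v,\eta,S)$.

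The main obstacle I anticipate is not the coordinatewise bookkeeping but the two places where measure-theoretic care is needed. First, Proposition \ref{thm:marginalization_theorem} is stated for a deterministic base measure, whereas $H^{*}$ depends on the random $Q_{0}$; the conditioning-then-integrating step must be justified by Fubini/Tonelli, i.e.\ by checking that the $Q_{0}$-integral commutes with the DP-mixture construction, so that for the second half it can be identified with the outer $\DP(dQ_{0})$ integral rather than collapsed prematurely. Second, for the $\btheta$-marginal the relevant second coordinate $Q_{j}$ is measure-valued, so one must confirm that Proposition \ref{thm:marginalization_theorem} and Definition \ref{def:DPM} are legitimately applied with $\Theta_{2}=\mathbb{P}$ endowed with its $\sigma$-algebra, and that $H\times\DP(vQ_{0})$ is a bona fide product measure on $\Theta\times\mathbb{P}$ whose $\mathbb{P}$-marginal is $\DP(vQ_{0})$. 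Once these points are settled, both halves follow immediately from a single proposition applied to each coordinate.
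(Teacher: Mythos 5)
Your proposal is correct and takes essentially the same route as the paper's own proof: both reduce the theorem to Proposition \ref{thm:marginalization_theorem} applied coordinatewise to the product base measure $H\times\DP\left(vQ_{0}\right)$, handling the randomness of $Q_{0}$ by conditioning on it and integrating out at the end (the paper phrases this as taking expectations on both sides of the proposition's identity, with the marginal DPM over $\btheta$ being constant in $Q_{0}$ and the conditional DPM over the $Q_{j}$'s matching the inner integral of the $\nDPM$ definition). Your explicit remarks on the Fubini step and on applying the proposition with a measure-valued second coordinate simply make precise what the paper leaves implicit.
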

\begin{proof}
We sketch the main steps, Appendix \ref{proof_Theorem4} provides
more detail. Let $H^{*}=H_{1}\times H_{2}$, we note that when either
$H_{1}$ or $H_{2}$ are random, a result similar to Proposition \ref{Proof_Prop2}
still holds by taking the expectation on both sides of the equality.
Now let $H_{1}=H$ and $H_{2}=\DP\left(vQ_{0}\right)$ where $Q_{0}\sim\DP(\eta S)$
yields the proof for the marginalization of $\bvarphi$; let $H_{1}=\DP\left(vQ_{0}\right)$
and $H_{2}=H$ yields the proof for the marginalization of $\btheta$.
\end{proof}

\section{Experiments}

We first evaluate the model via simulation studies, then demonstrate
its applications on text and image modeling using three real-world
datasets. Throughout this section, unless explicitly stated, discrete
data is modeled by Multinomial with Dirichlet prior, while continuous
data is modeled by Gaussian (unknown mean and unknown variance) with
Gaussian-Gamma prior.

\subsection{Simulation studies}

\begin{figure*}[t]
\begin{centering}
\begin{tabular}{cc}
\includegraphics[width=0.95\columnwidth]{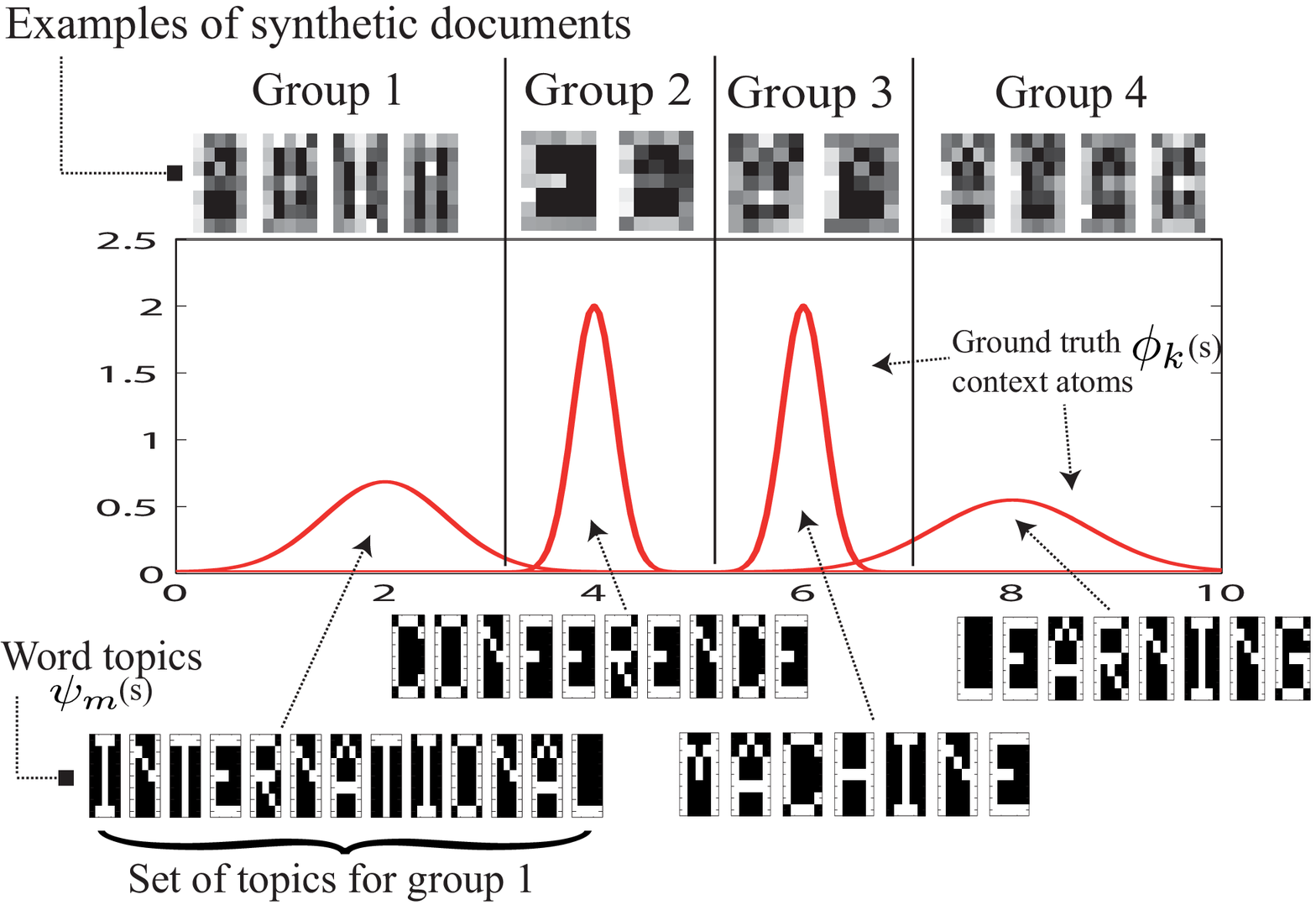} & \includegraphics[width=0.92\columnwidth]{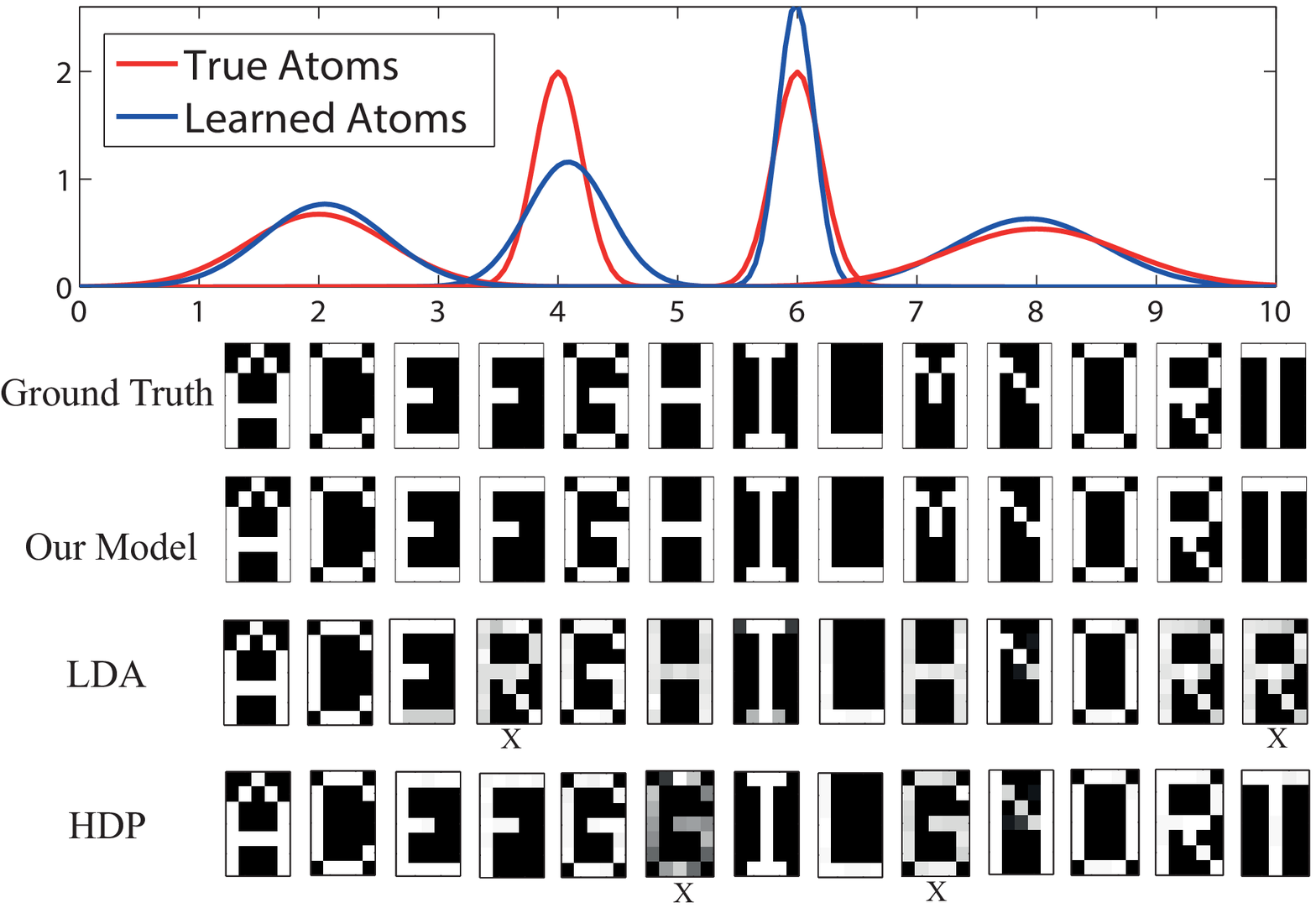}\tabularnewline
\end{tabular}\vspace{-0.1in}

\par\end{centering}

\caption{Results from simulation study. Left: illustration of data generation
with ground truth for context atoms are 4 univariate Gaussians centered
at $2,4,6$ and $8$ respectively (different variances). Right: Our
model recovers the correct 4 group clusters, their context distributions
and the set of shared topics. LDA and HDP are unable to recover the
true content topics without using contexts. \label{fig:Synthetic-Experiment.}}
\end{figure*}

The main goal is to investigate the posterior consistency of the model,
i.e., its ability to recover the true group clusters, context distribution
and content topics. To synthesize the data, we use $M=13$ topics
 which are the 13 unique letters in the ICML string ``INTERNATIONAL
CONFERENCE MACHINE LEARNING''. Similar to \cite{griffiths2004finding},
each topic $\psi_{m}$ is a distribution over 35 words (pixels) and
visualized as a $7\times5$ binary image. We generate $K=4$ clusters
of 100 documents each. For each cluster, we choose a set of topics
corresponding to letters in the each of 4 words in the ICML string.
The topic mixing distribution $\tau_{k}$ is an uniform distribution
over the chosen topic letters. Each cluster is also assigned a context-generating
univariate Gaussian distribution. These generating parameters are
shown in Figure \ref{fig:Synthetic-Experiment.} (left). Altogether
we have $J=400$ documents; for each document we sample $N_{j}=50$
words and a context variable $x_{j}$ drawing from the cluster-specific
Gaussian.

We model the word $w_{ji}$ with Multinomial and Gaussian for context
$x_{j}$. After 100 Gibbs iterations, the number of context and content
topics ($K=4,M=13$) are recovered correctly: the learned context
atoms $\phi_{k}$ and topic $\psi_{m}$ are almost identical to the
ground truth (Figure \ref{fig:Synthetic-Experiment.}, right) and
the model successfully identifies the $4$ clusters of documents with
topics corresponding to the 4 words in the ICML string.

To demonstrate the importance of context observation, we then run
LDA and HDP with only the word observations (ignoring context) where
the number of topic of LDA is set to 13. As can be seen from Figure
\ref{fig:Synthetic-Experiment.} (right), LDA and HDP have problems
in recovering the true topics. They cannot distinguish small differences
between the overlapping character topics (e.g M vs N, or I vs T).
Further analysis of the role of context in $\modelname$ is provided
in Appendix \ref{sub:Relative-Roles-of}.

\subsection{{\normalsize Experiments with Real-World Datasets}}

\begin{table*}[t]
\begin{centering}
\begin{tabular}{|c|c|c|c|c|c|}
\hline 
\multirow{2}{*}{Method} & \multicolumn{4}{c|}{Perplexity (\textbf{\emph{on words only}})} & \multirow{2}{*}{Feature used}\tabularnewline
\cline{2-5} 
 & PNAS & (K,M) & NIPS & (K,M) & \tabularnewline
\hline 
HDP \cite{Teh_etal_06hierarchical} & $3027.5$ & $(-,86)$ & $1922.1$ & $(-,108)$ & words\tabularnewline
\hline 
npTOT \cite{dubey2012non,phung_nguyen_bui_nguyen_venkatesh_tr12} & $2491.5$ & $(-,145)$ & $1855.33$ & $(-,94)$ & words+timestamp\tabularnewline
\hline 
$\modelname$ without context  & $1742.6$ & $(40,126)$ & $1583.2$ & $(19,61)$ & words\tabularnewline
\hline 
$\modelname$ with titles & -- & -- & $1393.4$ & $(32,80)$ & words+title\tabularnewline
\hline 
$\modelname$ with authors & -- & -- & $1246.3$ & $(8,55)$ & words+authors\tabularnewline
\hline 
$\modelname$ with timestamp & $\mathbf{895.3}$ & $(12,117)$ & \textbf{$\mathbf{984.7}$} & $(15,95)$ & words+timestamp\tabularnewline
\hline 
\end{tabular}
\par\end{centering}

\centering{}\caption{Perplexity evaluation on PNAS and NIPS datasets. (K,M) is (\#cluster,\#topic).
(Note: missing results are due to title and author information not
available in PNAS dataset). \label{tab:Perplexity Table}}
\end{table*}

We use two standard NIPS and PNAS text datasets, and the NUS-WIDE
image dataset.

\emph{NIPS} contains 1,740 documents with vocabulary size 13,649 (excluding
stop words); timestamps (1987-1999), authors (2,037) and title information
are available and used as group-level context. \emph{PNAS} contains
79,800 documents, vocab size = 36,782 with publication timestamp (915-2005).
For \emph{NUS-WIDE} we use a subset of the 13-class animals %
\footnote{downloaded from http://www.ml-thu.net/\textasciitilde{}jun/data/%
} comprising of 3,411 images (2,054 images for training and 1357 images
for testing) with off-the-shelf features including 500-dim bag-of-word
SIFT vector and 1000-dim bag-of-tag annotation vector.

\paragraph*{\vspace{-0.4in}
}

\paragraph*{Text Modeling with Document-Level Contexts\protect \\
}

We use NIPS and PNAS datasets with 90\% for training and 10\% for
held-out perplexity evaluation. We compare the perplexity with HDP
\cite{Teh_etal_06hierarchical} where no group-level context can be
used, and npTOT \cite{dubey2012non} where only timestamp information
can be used. We note that unlike our model, npTOT requires replication
of document timestamp for \emph{every} word in the document, which
is somewhat unnatural. 

We use perplexity score \cite{Blei_etal_03} on held-out data as performance
metric, defined as%
\footnote{Appendix \ref{sub:Perplexity-Evaluation} provides further details
on how to derive this score from our model%
} $\exp\left\{ -\sum_{j=1}^{J}\log p\left(\bw_{j}^{\text{test}}\mid\bx^{\text{train}},\bw^{\text{train}}\right)/\left(\sum_{j}N_{j}^{\text{test}}\right)\right\} $.
To ensure fairness and comparable evaluation, \emph{only words} in
held-out data is used to compute the perplexity. We use univariate
Gaussian  for timestamp and Multinomial distributions for words,
tags and authors. We ran collapsed Gibbs for 500 iterations after
100 burn-in samples. 

Table \ref{tab:Perplexity Table} shows the results where $\modelname$
achieves significant better performance. This shows that group-level
context information during training provide useful guidance for the
modelling tasks. Regarding the informative aspect of group-level context,
we achieve better perplexity with timestamp information than with
titles and authors. This may be explained by the fact that 1361 authors
(among 2037) show up only once in the data while title provides little
additional information than what already in that abstracts. Interestingly,
without the group-level context information, our model still predicts
the held-out words better than HDP. This suggests that inducing partitions
over documents simultaneously with topic modelling is beneficial.

Beyond the capacity of HDP and npTOT, our model can induce clusters
over documents (value of $K$ in Table \ref{tab:Perplexity Table}).
Figure \ref{fig:Examples-of-authorNIPS} shows an example of one such
document cluster discovered from NIPS data with authors as context.
\begin{figure}[th]
\centering{}%
\begin{tabular}{|>{\raggedright}p{0.98\columnwidth}|}
\hline 
\includegraphics[width=0.8\columnwidth]{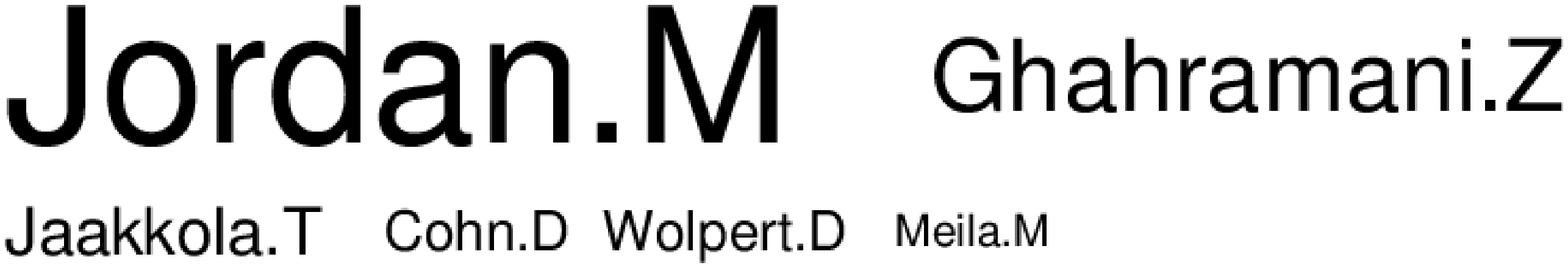}\tabularnewline
\hline 
{\tiny ~~~~~On the use of evidence in neural networks {[}1993{]}}{\tiny \par}

{\tiny ~~~~~Supervised Learning from Incomplete Data via an EM
{[}1994{]}}{\tiny \par}

{\tiny ~~~~~Fast Learning by Bounding Likelihoods in ... Networks
{[}1996{]}}{\tiny \par}

{\tiny ~~~~~Factorial Hidden Markov Models {[}1997{]}}{\tiny \par}

{\tiny ~~~~~Estimating Dependency Structure as a Hidden Variable
{[}1998{]}}{\tiny \par}

{\tiny ~~~~~Maximum Entropy Discrimination {[}1999{]}}\tabularnewline
\hline 
~~~\includegraphics[width=0.8\columnwidth]{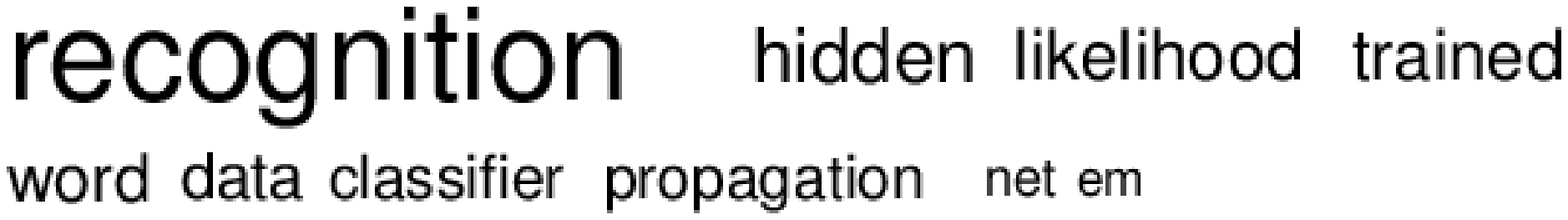}

~~~\includegraphics[width=0.8\columnwidth]{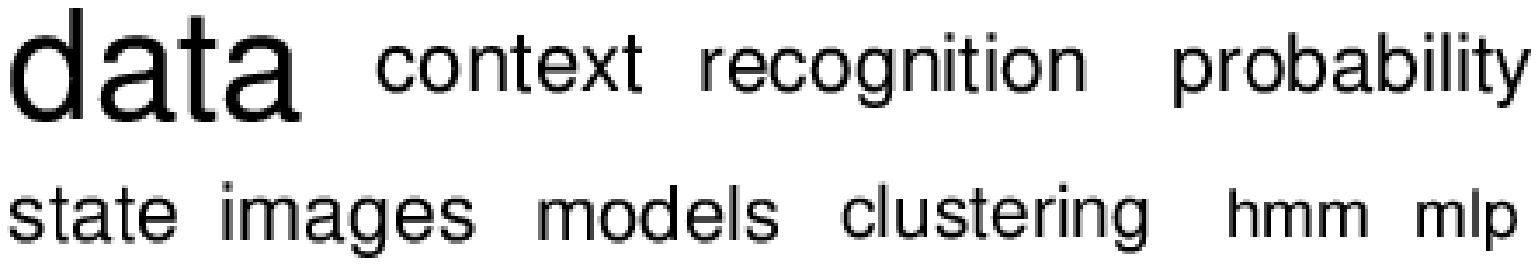}

~~~\includegraphics[width=0.8\columnwidth]{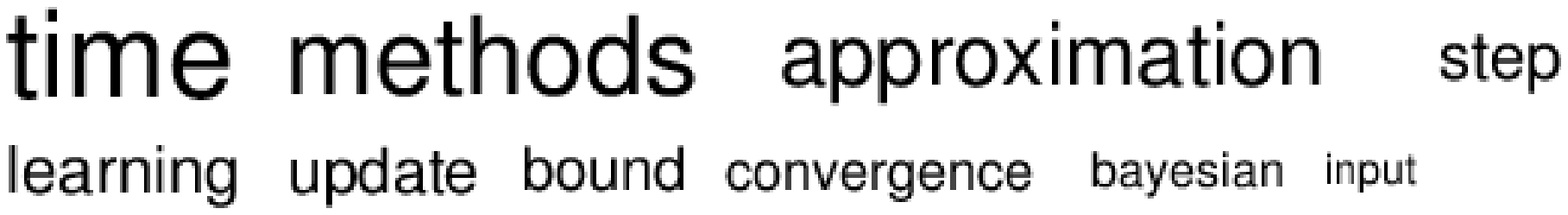}\tabularnewline
\hline 
\end{tabular}\caption{An example of document cluster from NIPS. Top: distribution over authors.
Middle: examples of paper titles. Bottom: examples of word topics
in this cluster.\label{fig:Examples-of-authorNIPS}}
\end{figure}

Our proposed model also allows flexibility in deriving useful understanding
into the data and to evaluate on its predictive capacity (e.g., who
most likely wrote this article, which authors work in the same research
topic and so on). Another possible usage is to obtain \emph{conditional}
distributions among context topics $\phi_{k}$ (s) and content topics
$\psi_{m}$ (s). For example if the context information is timestamp,
the model immediately yields the distribution over time for a topic,
showing when the topic rises and falls. Figure \ref{fig:TimestampContext}
illustrates an example of a distribution over time for a content topic
discovered from PNAS dataset where timestamp was used as context.
This topic appears to capture a congenital disorder known as \emph{Albinism.}
This distribution illustrates research attention to this condition
over the past 100 years from PNAS data. To seek evidence for this
result, we search the term ``Albinism'' in Google Scholar, using
the top 50 searching results and plot the histogram over time in the
same figure. Surprisingly, we obtain a very close match between our
results and the results from Google Scholar as evidenced in the figure.

\begin{figure}
\begin{centering}
\begin{tabular}{l}
\includegraphics[width=0.95\columnwidth]{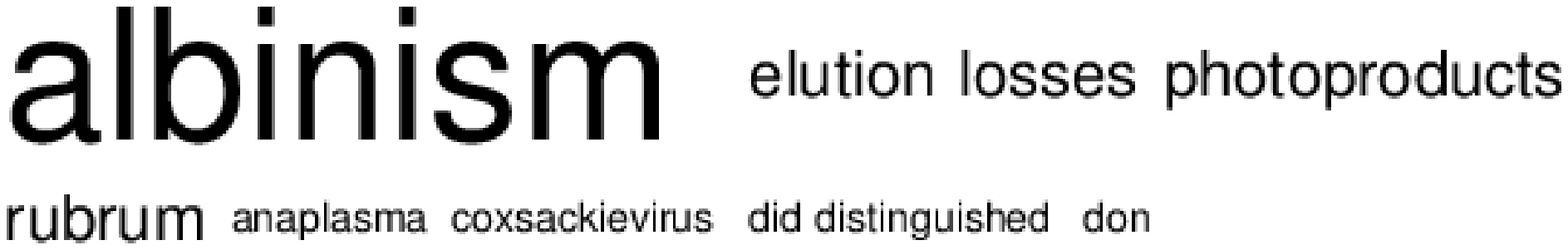}\tabularnewline
\includegraphics[width=0.95\columnwidth]{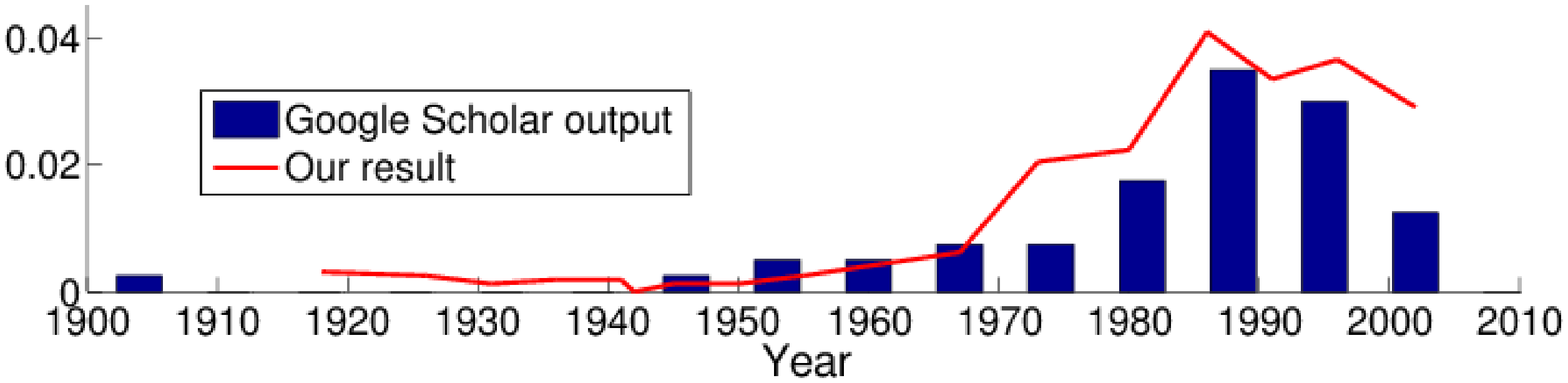}\tabularnewline
\end{tabular}\vspace{-0.2in}

\par\end{centering}

\caption{Topic \emph{Albinism} discovered from PNAS dataset and its conditional
distribution over time using our model; plotted together with results
independently searched from Google Scholar using the top 50 hits.\label{fig:TimestampContext}}
\end{figure}

\paragraph{Image Clustering with Image-Level Tags\protect \\
}

We evaluate the clustering capacity of $\modelname$ using contexts
on an image clustering task. Our dataset is NUS-WIDE described earlier.
We use bag-of-word SIFT features from each image for its content.
Since each image in this dataset comes with a set of tags, we exploit
them as context information, hence each context observation $x_{j}$
is a bag-of-tag annotation vector. 

First we perform the perplexity evaluation for this dataset using
a similar setting as in the previous section. Table \ref{tab:NUS-WIDE-dataset.-Perplexity}
presents the results where our model again outperforms HDP even when
no context (tags) is used for training. 

\begin{table}
\begin{centering}
\begin{tabular}{|c|c|c|}
\hline 
Method & Perplexity & Feature used\tabularnewline
\hline 
\hline 
HDP & $175.62$ & SIFT\tabularnewline
\hline 
$\modelname$ without context & $162.74$ & SIFT\tabularnewline
\hline 
$\modelname$ with context & $\mathbf{152.32}$ & Tags+SIFT\tabularnewline
\hline 
\end{tabular}
\par\end{centering}

\caption{NUS-WIDE dataset. Perplexity is evaluated on SIFT feature.\label{tab:NUS-WIDE-dataset.-Perplexity}}

\end{table}

\begin{figure}
\begin{centering}
\begin{tabular}{cc}
\includegraphics[width=0.45\columnwidth]{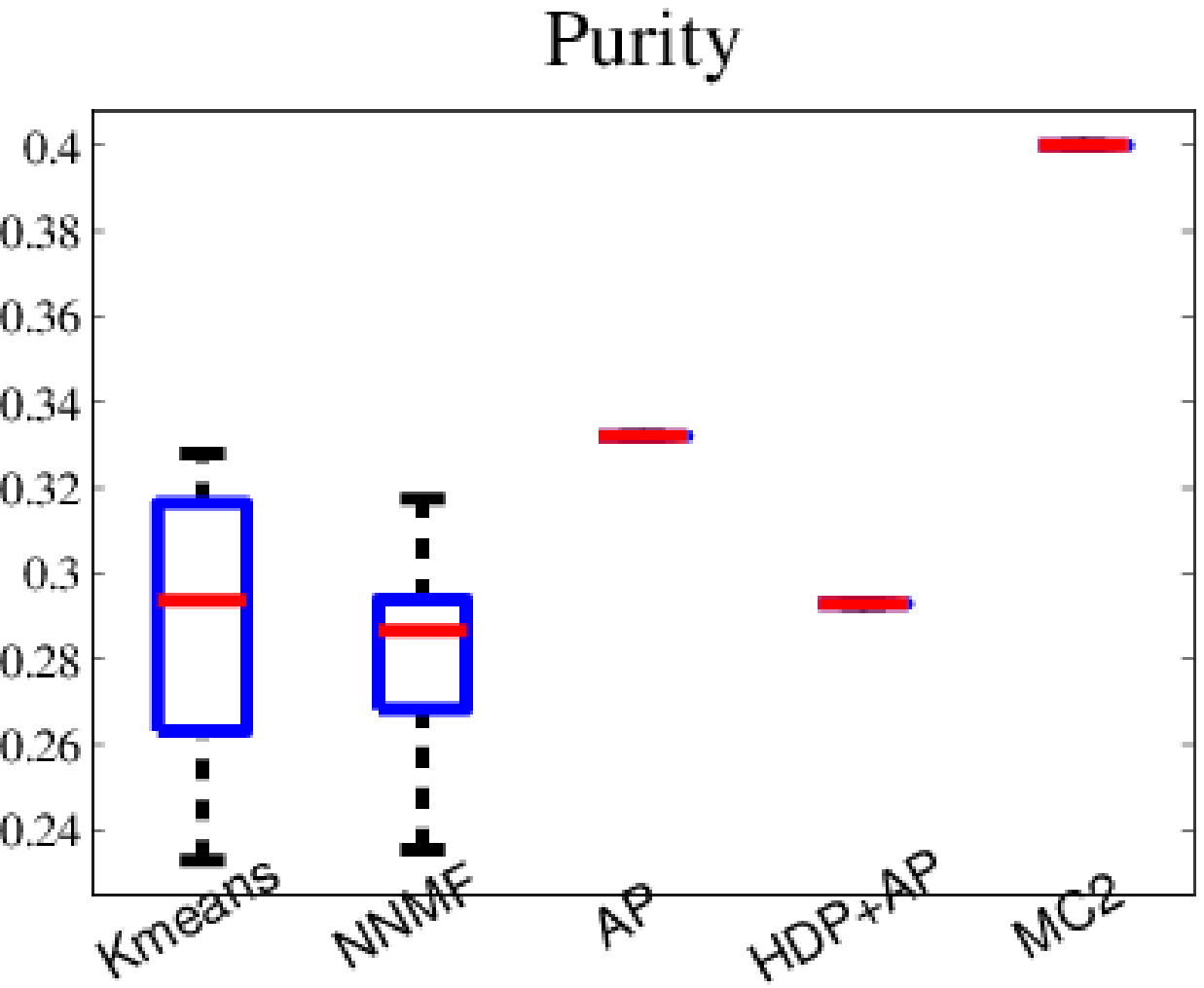} & \includegraphics[width=0.45\columnwidth]{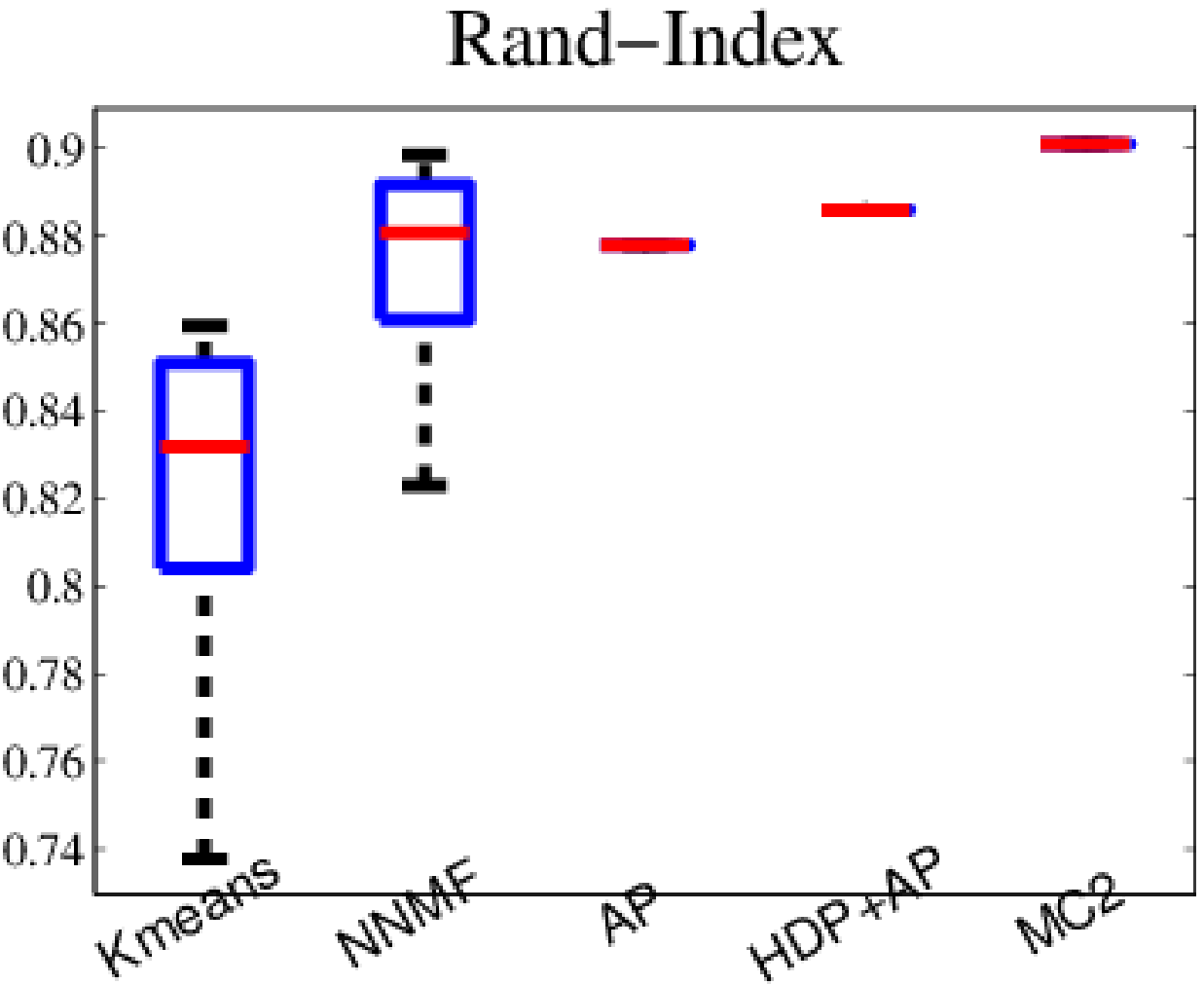}\tabularnewline
\includegraphics[width=0.45\columnwidth]{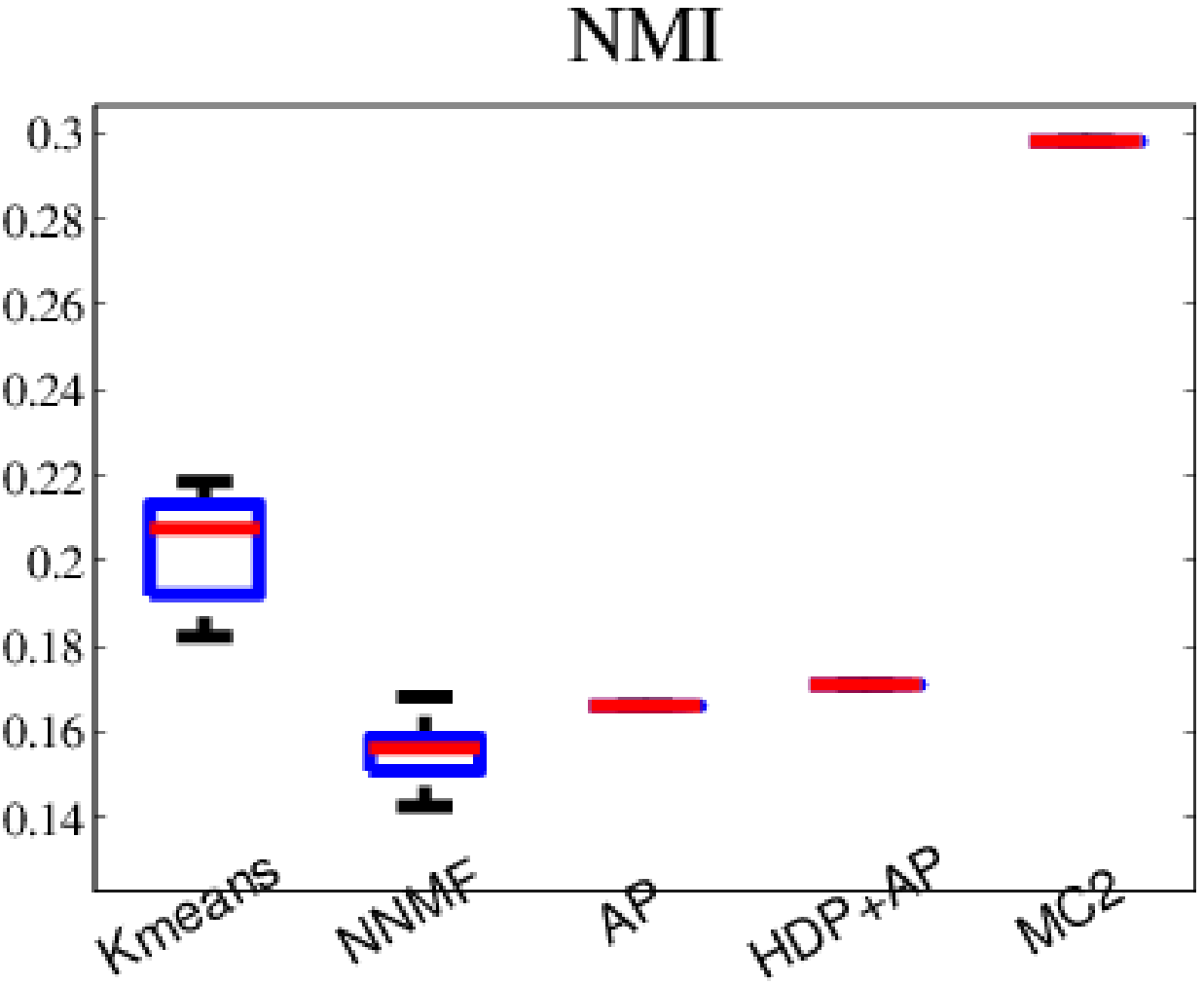} & \includegraphics[width=0.45\columnwidth]{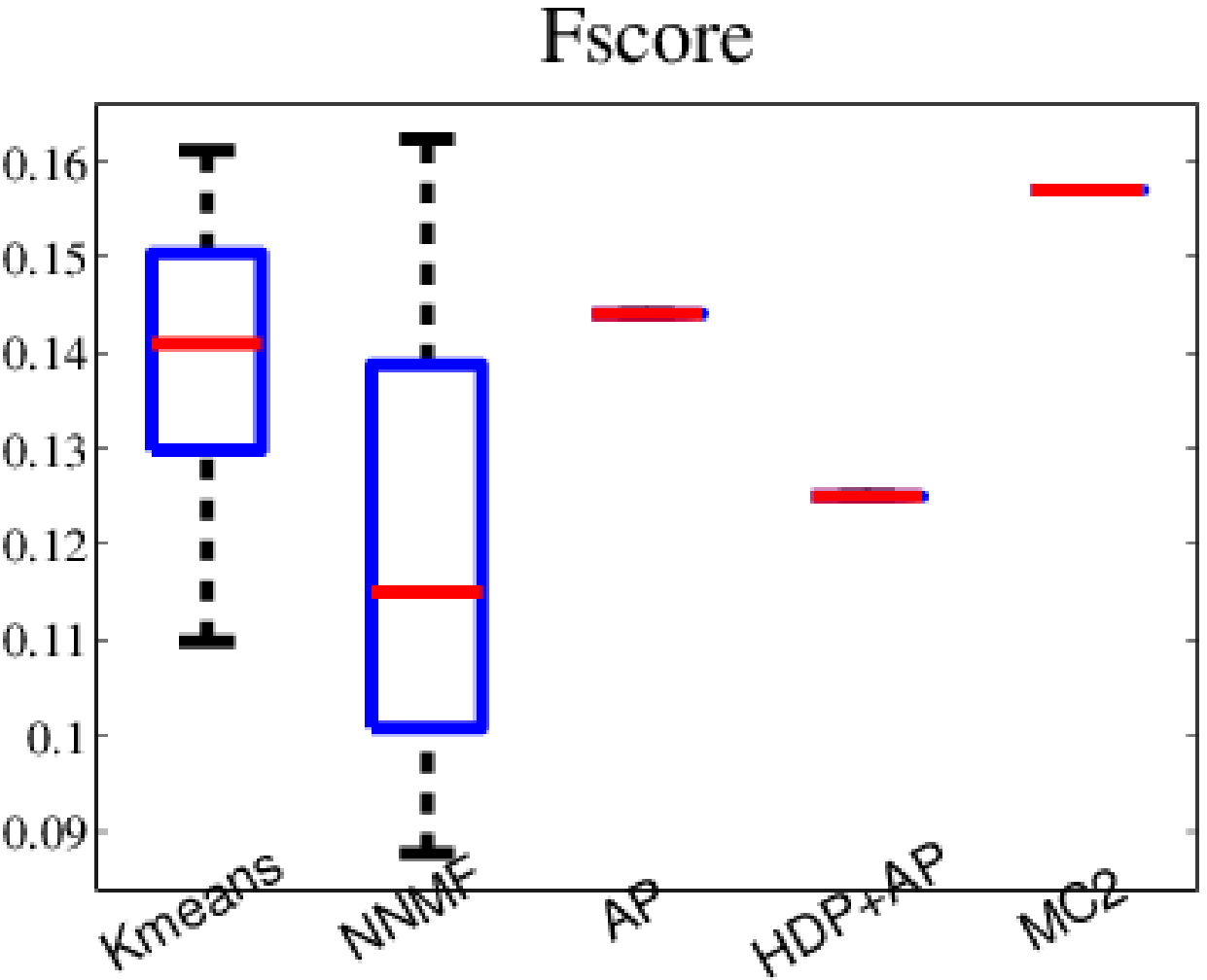}\tabularnewline
\end{tabular}\vspace{-0.2in}
\caption{Clustering performance measured in purity, NMI, Rand-Index and F-score
using NUS-WIDE dataset.\label{fig:NUSWIDE_ClusteringPlot}}

\par\end{centering}

\end{figure}

Next we evaluate the clustering quality of the model using the provided
13 classes as ground truth. We report performance on four well-known
clustering evaluation metrics: Purity, Normalized Mutual Information
(NMI), Rand-Index (RI), and Fscore (detailed in \cite{Rand_71objective,cai2011locally}).
 We use the following baselines for comparison: \vspace{-0.15in}

\begin{itemize}
\item Kmeans and Non-negative Matrix Factorization (NMF)\cite{lee1999learning}.
For these methods, we need to specify the number of clusters in advance,
hence we vary this number from 10 to 40. We then report the min, max,
mean and standard deviation.\vspace{-0.05in}

\item Affinity Propagation (AP) \cite{Frey_Dueck_07clustering}: AP requires
a similarity score between two documents and we use the Euclidean
distance for this purpose.\vspace{-0.05in}

\item Hierarchical Dirichlet Process (HDP) + AP: we first run HDP using
content observations, and then apply Affinity Propagation with similarity
score derived from the symmetric KL divergence between the mixture
proportions from two documents.\vspace{-0.15in}

\end{itemize}
Figure \ref{fig:NUSWIDE_ClusteringPlot} shows the result in which
our model consistently delivers highest performance across all four
metrics. For purity and NMI, our model beats all by a wide margin.

To gain some understanding on the clusters of images induced by our
model, we run t-SNE \cite{van2008visualizing}, projecting the feature
vectors (both content and context) onto a 2D space. For visual clarity,
we randomly select 7 out of 28 clusters and display in Figure \ref{fig:ClusteringTSNE}
where it can be seen that they are reasonably well separated.

\begin{figure}
\begin{centering}
\includegraphics[width=0.95\columnwidth]{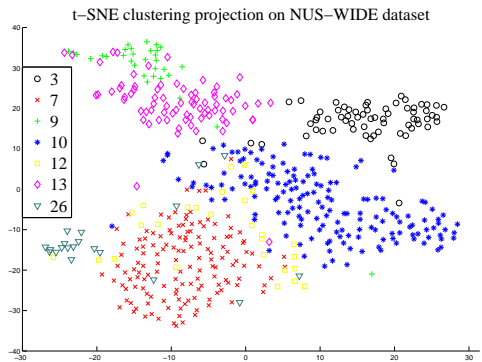}\vspace{-0.3in}

\par\end{centering}

\caption{Projecting 7 discovered clusters (among 28) on 2D using t-SNE \cite{van2008visualizing}.
\label{fig:ClusteringTSNE}}
\end{figure}

\paragraph{Effect of partially observed and missing data\vspace{0.05in}
\protect \\
}

Missing and unlabelled data is commonly encountered in practical applications.
Here we examine the effect of context observability on document clustering
performance. To do so, we again use the NUS-WIDE 13-animal subset
as described previously, then vary the amount of observing context
observation $x_{j}$ with missing proportion ranges from $0\%$ to
$100\%$. 

\begin{table}[th]
\centering{}%
\begin{tabular}{|c|c|c|c|c|}
\hline 
Missing (\%) & Purity & NMI & RI & F-score\tabularnewline
\hline 
0 \% & 0.407 & 0.298 & 0.901 & 0.157\tabularnewline
\hline 
\hline 
25 \% & 0.338 & 0.245 & 0.892 & 0.149\tabularnewline
\hline 
50 \% & 0.320 & 0.236 & 0.883 & 0.137\tabularnewline
\hline 
75 \% & 0.313 & 0.187 & 0.860 & 0.112\tabularnewline
\hline 
100 \% & 0.306 & 0.188 & 0.867 & 0.119\tabularnewline
\hline 
\end{tabular}\vspace{-0.05in}
\caption{Clustering performance with different missing proportion of context
observation $x_{j}$.\label{tab:Clustering_missingdata}}
\end{table}

Table \ref{tab:Clustering_missingdata} reports the result. We make
two observations: a) utilizing context results in a big performance
gain as evidenced in the difference between the top and bottom row
of the table, and b) as the proportion of missing context starts to
increase, the performance degrades gracefully up to 50\% missing.
This demonstrates the robustness of model against the possibility
of missing context data.

\section{Conclusion\label{sec:conclusion}}

We have introduced an approach for multilevel clustering when there
are group-level context information. Our $\modelname$ provides a
single joint model for utilizing group-level contexts to form group
clusters while discovering the shared topics of the group contents
at the same time. We provide a collapsed Gibbs sampling procedure
and perform extensive experiments on three real-world datasets in
both text and image domains. The experimental results using our model
demonstrate the importance of utilizing context information in clustering
both at the content and at the group level. Since similar types of
contexts (time, tags, locations, ages, genres) are commonly encountered
in many real-world data sources, we expect that our model will also
be further applicable in other domains. 

Our model contains a novel ingredient in DP-based Bayesian nonparametric
modeling: we propose to use a base measure in the form of a product
between a context-generating prior $H$ and a content-generating prior
$\DP(vQ_{0})$. Doing this results in a new model with one marginal
being the DPM and another marginal being the nDP mixture, thus establishing
an interesting bridge between the DPM and the nDP. Our product base
measure construction can be generalized to yield new models suitable
for data presenting in more complicated nested group structures (e.g.,
more than 2-level deep).

\bibliographystyle{icml2014}
\bibliography{icml2014,dpresearch}

\appendix

\section{Appendix}

This note provides supplementary information for the main paper. It
has three parts: a) the proof for the marginalization property of
our proposed model, b) detailed derivations for our inference, and
c) equations to show how the perplexity in the experiment was computed.

\subsection{Proof for Marginalization Property (Theorem 4)}

We start with a proposition on the marginalization result for DPM
with the product measure then move on the final proof for our proposed
model.

\subsubsection{Marginalization of DPM with product measure}

Let $H$ be a measure over some measurable space $(\Theta,\Sigma)$.
Let $\mathbb{P}$ be the set of all measures over $(\Theta,\Sigma)$,
suitably endowed with some $\sigma$-algebra. Let $G\sim\DP(\alpha H)$
be a draw from a Dirichlet process.
\begin{lem}
\label{lem:ess_indp}Let $S_{1}\ldots S_{n}$ be $n$ measurable sets
in $\Sigma$. We form a measurable partition of $\Theta$, a collection
of disjoint measurable sets, that generate $S_{1},\ldots,S_{n}$ as
follows. If $S$ is a set, let $S^{1}=S$ and $S^{-1}=\Theta\backslash S$.
Then $S^{*}=\left\{ \bigcap_{i=1}^{n}S_{i}^{c_{i}}\vert c_{i}\in\left\{ 1,-1\right\} \right\} $
is a partition of $\Theta$ into a finite collection of \emph{disjoint}
measurable sets with the property that any $S_{i}$ can be written
as a union of some sets in $S^{*}$. Let the element of $S^{*}$ be
$A_{1}\ldots A_{n^{*}}$ (note $n^{*}\le2^{n})$. Then the expectation
\begin{align}
\ESS[G\left(S_{1}\right),\ldots,G\left(S_{n}\right)]G & =\label{eq:-1}\\
 & \int\prod_{i=1}^{n}G\left(S_{i}\right)\DP\left(dG\gv\alpha H\right)
\end{align}
depends only on $\alpha$ and $H(A_{i})$. In other words, the above
expectation can be written as a function $E_{n}(\alpha,H(A_{1}),\ldots H(A_{n^{*}}))$.
\end{lem}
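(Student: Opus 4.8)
The plan is to reduce the integral to a moment of a finite-dimensional Dirichlet distribution, exploiting the defining finite-dimensional property of the Dirichlet process. First I would invoke Ferguson's characterization of $\DP(\alpha H)$: for any finite measurable partition of $\Theta$, the vector of $G$-masses of the partition cells follows a Dirichlet distribution whose concentration parameters are $\alpha$ times the $H$-masses of those cells. Applied to the partition $A_1,\ldots,A_{n^*}$ constructed in the statement (which is a genuine finite measurable partition of $\Theta$, as asserted), this gives $(G(A_1),\ldots,G(A_{n^*}))\sim\dirpdf(\alpha H(A_1),\ldots,\alpha H(A_{n^*}))$.

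Next I would rewrite the integrand in terms of this vector. By construction each $S_i$ is a disjoint union of some of the cells, say $S_i=\bigcup_{j\in I_i}A_j$ for an index set $I_i\subseteq\{1,\ldots,n^*\}$ determined purely by the sign pattern $c_i$ and not by $H$ or $\alpha$. Finite additivity of the measure $G$ then yields $G(S_i)=\sum_{j\in I_i}G(A_j)$, so the integrand $\prod_{i=1}^n G(S_i)$ is a fixed polynomial $P(G(A_1),\ldots,G(A_{n^*}))$ whose coefficients depend only on the combinatorial membership pattern $\{I_i\}$.

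Finally I would take the expectation. Since $0\le\prod_{i=1}^n G(S_i)\le 1$ the integral is finite and equals $\Ess{P(G(A_1),\ldots,G(A_{n^*}))}$ under the Dirichlet law above. The mixed moments of a Dirichlet vector are a closed-form function of its concentration parameters alone, so this expectation depends only on $\alpha H(A_1),\ldots,\alpha H(A_{n^*})$, hence only on $\alpha$ and $H(A_1),\ldots,H(A_{n^*})$; naming it $E_n$ finishes the argument. There is no real obstacle here: the only points needing care are the set-theoretic bookkeeping that the $A_j$ partition $\Theta$ and that each $S_i$ is exactly a union of cells (both already granted in the statement), and the observation that a polynomial expectation against a Dirichlet measure is a function of the Dirichlet parameters only, which is immediate from the Dirichlet moment formula and requires no explicit evaluation for this lemma.
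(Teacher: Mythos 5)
Your proposal is correct and takes essentially the same route as the paper's own proof: both invoke Ferguson's finite-dimensional characterization of the Dirichlet process on the partition $A_{1},\ldots,A_{n^{*}}$, expand each $G\left(S_{i}\right)$ as a sum of cell masses $G\left(A_{j}\right)$ by finite additivity, and conclude that the expectation of the resulting polynomial is a function of $\alpha$ and $H\left(A_{1}\right),\ldots,H\left(A_{n^{*}}\right)$ alone. Your write-up merely makes explicit the bookkeeping (the index sets $I_{i}$, the boundedness of the integrand) that the paper leaves implicit.
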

It is easy to see that since $S_{i}$ can always be expressed as the
sum of some disjoints $A_{i}$, $G\left(S_{i}\right)$ can respectively
be written as the sum of some $G\left(A_{i}\right)$. Furthermore,
by definition of a Dirichlet process, the vector $\left(G\left(A_{1}\right),\ldots,G\left(A_{n^{*}}\right)\right)$
distributed according to a finite Dirichlet distribution $\left(\alpha H\left(A_{1}\right),\ldots,\alpha H\left(A_{n^{*}}\right)\right)$,
therefore the expectation $\ESS[G\left(S_{i}\right)]G$ depends only
on $\alpha$ and $H\left(A_{i}\right)$ (s).
\begin{defn}
(DPM)\label{def:DPM} A DPM is a probability measure over $\Theta^{n}\ni\left(\theta_{1},\ldots,\theta_{n}\right)$
with the usual product sigma algebra $\Sigma^{n}$ such that for every
collection of measurable sets $\left\{ \left(S_{1},\ldots,S_{n}\right):S_{i}\in\Sigma,i=1,\ldots,n\right\} $:
\begin{eqnarray}
\DPM(\theta_{1} & \in & S_{1},\ldots,\theta_{n}\in S_{n}\vert\alpha,H)=\label{eq:}\\
 &  & \int_{G}\prod_{i=1}^{n}G\left(S_{i}\right)\DP\left(dG\gv\alpha H\right)\nonumber 
\end{eqnarray}

\end{defn}
Consider two measurable spaces $(\Theta_{1},\Sigma_{1})$ and $(\Theta_{2},\Sigma_{2})$
and let $(\Theta,\Sigma)$ be their product space where $\Theta=\Theta_{1}\times\Theta_{2}$
and $\Sigma=\Sigma_{1}\times\Sigma_{2}$. We present the general theorem
that states the marginal result from a product base measure.
\begin{prop}
\label{Proof_Prop2}Let $H^{*}$ be a measure over the product space
$\Theta=\Theta_{1}\times\Theta_{2}$. Let $H_{1}$ be the marginal
of $H^{*}$ over $\Theta_{1}$ in the sense that for any measurable
set $A\in\Sigma_{1}$, $H_{1}\left(A\right)=H^{*}\left(A\times\Theta_{2}\right)$.
Denote by $\theta_{i}$ the pair \textup{$\left(\theta_{i}^{\left(1\right)},\theta_{i}^{\left(2\right)}\right)$,}
then:
\begin{align*}
 & \DPM\left(\theta_{1}^{\left(1\right)}\in S_{1},\ldots,\theta_{n}^{\left(1\right)}\in S_{n}\gv\alpha H_{1}\right)\\
 & \quad=\DPM\left(\theta_{1}\in S_{1}\times\Theta_{2},\ldots,\theta_{n}\in S_{n}\times\Theta_{2}\gv\alpha H^{*}\right)
\end{align*}
for every collection of measurable sets $\left\{ \left(S_{1},\ldots,S_{n}\right):S_{i}\in\Sigma_{1},i=1,\ldots,n\right\} $. \end{prop}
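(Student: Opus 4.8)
The plan is to reduce both sides of the claimed identity to the single function $E_n$ furnished by Lemma~\ref{lem:ess_indp}, so that the equality reduces to matching a handful of finite Dirichlet parameters. First I would unfold both sides via Definition~\ref{def:DPM}. The left-hand side becomes $\int \prod_{i=1}^n G_1(S_i)\,\DP(dG_1 \mid \alpha H_1)$, an $n$-fold DPM expectation over $(\Theta_1,\Sigma_1)$ with base measure $H_1$; the right-hand side becomes $\int \prod_{i=1}^n G(S_i \times \Theta_2)\,\DP(dG \mid \alpha H^*)$, an $n$-fold DPM expectation over $(\Theta,\Sigma)$ with base measure $H^*$ evaluated at the cylinder sets $S_i \times \Theta_2$ (each measurable in $\Sigma$ since $S_i \in \Sigma_1$).

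The central observation is that the finite partition attached to the right-hand side by Lemma~\ref{lem:ess_indp} is the image, under the cylinder map $A \mapsto A \times \Theta_2$, of the partition attached to the left-hand side. Concretely, running the lemma's construction on the sets $S_i \times \Theta_2$ I would compute $(S_i \times \Theta_2)^{1} = S_i \times \Theta_2$ and $(S_i \times \Theta_2)^{-1} = \Theta \setminus (S_i \times \Theta_2) = (\Theta_1 \setminus S_i) \times \Theta_2 = S_i^{-1} \times \Theta_2$, and then observe that a finite intersection over any sign pattern $c_i \in \{1,-1\}$ pulls the Cartesian product outward, $\bigcap_{i=1}^n (S_i^{c_i} \times \Theta_2) = \bigl(\bigcap_{i=1}^n S_i^{c_i}\bigr) \times \Theta_2$. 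Hence the partition of $\Theta$ relevant to the right-hand side is exactly $\{A_1 \times \Theta_2, \ldots, A_{n^*} \times \Theta_2\}$, where $A_1, \ldots, A_{n^*}$ is the partition of $\Theta_1$ generated by $S_1, \ldots, S_n$. Moreover $A_j \subseteq S_i$ holds if and only if $A_j \times \Theta_2 \subseteq S_i \times \Theta_2$, so each $G(S_i \times \Theta_2)$ decomposes into partition masses in exactly the same combinatorial way that $G_1(S_i)$ does, which is what guarantees that one and the same function $E_n$ governs both expectations.

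Finally I would match the parameters. By Lemma~\ref{lem:ess_indp} the left-hand side equals $E_n(\alpha, H_1(A_1), \ldots, H_1(A_{n^*}))$ and the right-hand side equals $E_n(\alpha, H^*(A_1 \times \Theta_2), \ldots, H^*(A_{n^*} \times \Theta_2))$. The defining property of the marginal gives $H^*(A_i \times \Theta_2) = H_1(A_i)$ for each $i$, so the two sides are the same function evaluated at identical arguments and therefore coincide. The only step requiring genuine care is the set-algebra bookkeeping showing that the complement-and-intersect operation commutes with the cylinder map and preserves the containment relations; this is routine, and I do not anticipate any real obstacle beyond stating it cleanly.
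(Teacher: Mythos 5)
Your proposal is correct and takes essentially the same route as the paper's own proof: both expand the right-hand side via Definition \ref{def:DPM}, identify the partition generated by the cylinder sets $S_{i}\times\Theta_{2}$ as $\left\{ A_{j}\times\Theta_{2}\right\}$ where $\left\{ A_{j}\right\}$ is the partition generated by the $S_{i}$, and conclude from Lemma \ref{lem:ess_indp} together with $H^{*}\left(A_{j}\times\Theta_{2}\right)=H_{1}\left(A_{j}\right)$. Your explicit check that complementation and intersection commute with the map $A\mapsto A\times\Theta_{2}$ and that containment relations are preserved (so the \emph{same} combinatorial function $E_{n}$ governs both sides) simply spells out a step the paper asserts without detail.
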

\begin{proof}
Since $\left\{ \left(S_{1},\ldots,S_{n}\right):S_{i}\in\Sigma_{1},i=1,\ldots,n\right\} $
are rectangles, expanding the RHS using Definition \ref{def:DPM}
gives:
\[
RHS=\int G\left(S_{1}\times\Theta_{2}\right)\ldots G\left(S_{n}\times\Theta_{2}\right)d\DP(dG\vert\alpha,H^{*})
\]
Let $T_{i}=S_{i}\times\Theta_{2}$, the above expression is the expectation
of $\prod_{i}G(T_{i})$ when $G\sim DP(\alpha H^{*})$. Forming collection
of the disjoint measurable sets $T^{*}=(B_{1}\ldots B_{n^{*}})$ that
generates $T_{i}$, then note that $B_{i}=A_{i}\times\Theta_{2}$,
and $S^{*}=(A_{1}\ldots A_{n^{*}})$ generates $S_{i}$. By definition
of $H_{1}$, $H_{1}(A_{i})=H^{*}(A_{i}\times\Theta_{2})=H^{*}(B_{i})$.
Using the Lemma \ref{lem:ess_indp} above, $RHS=E_{n}(\alpha,H^{*}(B_{1})\ldots H^{*}(B_{n^{*}}))$,
while $LHS=E_{n}(\alpha,H_{1}(A_{1})\ldots H_{1}(A_{n^{*}}))$ and
they are indeed the same. 
\end{proof}
We note that $H^{*}$ can be any arbitrary measure on $\Theta$ and,
in general, we do not require $H^{*}$ to factorize as product measure.

\subsubsection{Marginalization result for our proposed model}

Recall that we are considering a product base-measure of the form
$H^{*}=H\times\DP(vQ_{0})$ for the group-level DP mixture. Drawing
from a DP mixture with this base measure, each realization is a pair
$(\theta_{j},Q_{j})$; $\theta_{j}$ is then used to generate the
context $x_{j}$ and $Q_{j}$ is used to repeatedly generate the set
of content observations $w_{ji}$ within the group $j$. Specifically,
\begin{align}
U & \sim\dirproc\left(\alpha(H\times\DP(vQ_{0}))\right)\quad\text{where }Q_{0}\sim\dirproc\left(\eta S\right)\nonumber \\
(\theta_{j},Q_{j}) & \iid U\ \text{for \ }j=1,\ldots,J\label{eq:MC2}\\
\varphi_{ji} & \iid Q_{j},\quad\text{for each }j\text{ and }i=1,\ldots,N_{j}\nonumber 
\end{align}
In the above, $H$ and $S$ are respectively base measures for context
and content parameters $\theta_{j}$ and $\varphi_{ji}$. We start
with a definition for nested Dirichlet Process Mixture (nDPM) to proceed
further.
\begin{defn}
\label{def:nDPM}(nested DP Mixture) An $\nDPM$ is a probability
measure over $\Theta^{\sum_{j=1}^{J}N_{j}}\ni\left(\varphi_{11},\ldots,\varphi_{1N_{1}},\ldots,\varphi_{JN_{J}}\right)$
equipped with the usual product sigma algebra $\Sigma^{N_{1}}\times\ldots\times\Sigma^{N_{J}}$
such that for every collection of measurable sets $\left\{ \left(S_{ji}\right):S_{ji}\in\Sigma,j=1,\ldots,J,\, i=1\ldots,N_{j}\right\} $:
\begin{align*}
 & \nDPM(\varphi_{ji}\in S_{ji},\forall i,j\vert\alpha,v,\eta,S)\\
 & \,\,\,\,=\int\int\left\{ \prod_{j=1}^{J}\int\prod_{i=1}^{N_{j}}Q_{j}\left(S_{ji}\right)U\left(dQ_{j}\right)\right\} \\
 & \,\,\,\,\,\,\,\,\,\times\DP\left(dU\gv\alpha\DP\left(vQ_{0}\right)\right)\DP\left(dQ_{0}\gv\eta,S\right)
\end{align*}
We now state the main marginalization result for our proposed model.\end{defn}
\begin{thm}
\label{proof_Theorem4}Given $\alpha,H$ and $\alpha,v,\eta,S$, let
$\btheta=\left(\theta_{j}:\forall j\right)$ and $\bvarphi=\left(\varphi_{ji}:\forall j,i\right)$
be generated as in Eq (\ref{eq:MC2}). Then, marginalizing out $\bvarphi$
results in $\DPM\left(\btheta\gv\alpha,H\right)$, whereas marginalizing
out $\btheta$ results in $\nDPM\left(\bvarphi\vert\alpha,v,\eta,S\right)$.\end{thm}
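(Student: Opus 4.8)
The plan is to reduce both halves of the statement to Proposition~\ref{Proof_Prop2} by regarding the group-level draws $(\theta_j,Q_j)\iid U$, $U\sim\DP(\alpha H^*)$, as draws from a DP mixture on the product space $\Theta\times\mathbb{P}$, where the first coordinate $\theta_j\in\Theta$ carries the context parameter and the second coordinate $Q_j\in\mathbb{P}$ carries the content-generating measure. The product base measure is exactly $H^*=H\times\DP(vQ_0)$, so its two factors are precisely the marginals on the two coordinates, and each of the two marginalization directions amounts to discarding one coordinate of the pair. This is the uniform mechanism I would exploit for both claims.

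First I would treat the marginalization of $\bvarphi$. Each content parameter $\varphi_{ji}$ is drawn from the probability measure $Q_j$, so integrating $\varphi_{ji}$ out contributes a factor $1$ and leaves the law of $(\theta_j,Q_j)_j$ intact; what actually remains to be removed is the content coordinate $Q_j$. Applying Proposition~\ref{Proof_Prop2} with $\Theta_1=\Theta$, $\Theta_2=\mathbb{P}$, $H_1=H$ and $H_2=\DP(vQ_0)$, the marginal law of $\btheta$ is $\DPM(\btheta\gv\alpha,H_1)$; since $H_1=H$ is the marginal of $H^*$ on the context coordinate, this is exactly $\DPM(\btheta\gv\alpha,H)$.

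For the marginalization of $\btheta$ I would swap the roles of the coordinates, taking $\Theta_1=\mathbb{P}$, $\Theta_2=\Theta$, $H_1=\DP(vQ_0)$ and $H_2=H$. Discarding the context coordinate then yields that $(Q_j)_j$ follows $\DPM\left(\cdot\gv\alpha,\DP(vQ_0)\right)$, i.e.\ $Q_j\iid U$ with $U\sim\DP(\alpha\DP(vQ_0))$. Re-attaching the content draws $\varphi_{ji}\iid Q_j$ then reproduces precisely the integrand of Definition~\ref{def:nDPM}, so the marginal law of $\bvarphi$ is $\nDPM(\bvarphi\gv\alpha,v,\eta,S)$.

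The main obstacle is that Proposition~\ref{Proof_Prop2} is stated for a \emph{fixed} product base measure, whereas here the second factor $\DP(vQ_0)$ is itself random through $Q_0\sim\DP(\eta S)$, so $H^*$ is a random measure. I would resolve this by conditioning on $Q_0$: for each realization of $Q_0$ the base measure $H^*$ is deterministic, Proposition~\ref{Proof_Prop2} applies verbatim, and integrating both sides of the resulting equality against $\DP(dQ_0\gv\eta,S)$ preserves it. In the $\bvarphi$-direction the relevant marginal $H_1=H$ does not involve $Q_0$, so this outer expectation acts trivially on the left-hand side; in the $\btheta$-direction the integral over $Q_0$ is exactly the outermost integral appearing in the $\nDPM$ definition. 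The one technical point needing care is measurability: I must verify that the $\DPM$ probabilities are measurable functions of $Q_0$ so that the expectation over $Q_0$ is well defined, and that the partition-and-Dirichlet argument of Lemma~\ref{lem:ess_indp} still goes through when the ``first coordinate'' ranges over the abstract measure space $\mathbb{P}$ (suitably endowed with its $\sigma$-algebra) rather than over $\Theta$.
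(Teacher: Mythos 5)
Your proposal is correct and follows essentially the same route as the paper: both reduce the theorem to Proposition~\ref{Proof_Prop2} applied to the product base measure $H^{*}=H\times\DP(vQ_{0})$ with the two coordinate assignments swapped, and both handle the randomness of $\DP(vQ_{0})$ by conditioning on $Q_{0}$, applying the proposition for each fixed realization, and taking the expectation over $Q_{0}\sim\DP(\eta S)$ on both sides. Your added remarks on measurability and on Lemma~\ref{lem:ess_indp} over the abstract space $\mathbb{P}$ are points the paper leaves implicit, but they do not change the argument.
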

\begin{proof}
First we make observation that if we can show Proposition \ref{Proof_Prop2}
still holds when $H_{1}$ is random with $H_{2}$ is fixed and vice
versa, then the proof required is an immediate corollary of Proposition
\ref{Proof_Prop2} by letting $H^{*}=H_{1}\times H_{2}$ where we
first let $H_{1}=H$, $H_{2}=\DP\left(vQ_{0}\right)$ to obtain the
proof for the first result, and then swap the order $H_{1}=\DP\left(vQ_{0}\right),H_{2}=H$
to get the second result. 

To see that Proposition \ref{Proof_Prop2} still holds when $H_{2}$
is a random measure and $H_{1}$ is fixed, we let the product base
measure $H^{*}=H_{1}\times H_{2}$ and further let $\mu$ be a prior
probability measure for $H_{2}$, i.e, $H_{2}\sim\mu\left(\cdot\right)$.
Denote by $\theta_{i}$ the pair $\left(\theta_{i}^{\left(1\right)},\theta_{i}^{\left(2\right)}\right)$,
consider the marginalization over $H_{2}$:
\begin{align*}
 & \int_{H_{2}}\DPM\left(\theta_{1}\in S_{1}\times\Theta_{2},\ldots,\theta_{n}\in S_{n}\times\Theta_{2}\gv\alpha,H^{*}\right)\mu\left(H_{2}\right)\\
 & \qquad=\int_{\Sigma_{2}}\underbrace{\DPM\left(\theta_{1}^{\left(1\right)}\in S_{1},\ldots,\theta_{n}^{\left(1\right)}\in S_{n}\gv\alpha,H_{1}\right)}_{\text{constant w.r.t }H_{2}}\mu\left(H_{2}\right)\\
 & \qquad=\DPM\left(\theta_{1}^{\left(1\right)}\in S_{1},\ldots,\theta_{n}^{\left(1\right)}\in S_{n}\gv\alpha,H_{1}\right)\int_{\Sigma_{2}}\mu\left(H_{2}\right)\\
 & \qquad=\DPM\left(\theta_{1}^{\left(1\right)}\in S_{1},\ldots,\theta_{n}^{\left(1\right)}\in S_{n}\gv\alpha,H_{1}\right)
\end{align*}
When $H_{1}$ is random and $H_{2}$ is fixed. Let $\lambda\left(\cdot\right)$
be a prior probability measure for $H_{1}$, ie., $H_{1}\sim\lambda\left(\cdot\right)$.
It is clear that Proposition \ref{Proof_Prop2} holds for each draw
$H_{1}$ from $\lambda\left(\cdot\right)$. This complete our proof.
\end{proof}

\subsubsection{Additional result for correlation analysis in nDPM}

We now consider the correlation between $\varphi_{ik}$ and $\varphi_{jk'}$
for arbitrary $i,j,k$ and $k'$, i.e., we need to evaluate:
\begin{align*}
P\left(\varphi_{ik}\in A_{1},\varphi_{jk'}\in A_{2}\gv\alpha,\eta,v,S\right)
\end{align*}
for two measurable sets $A_{1},A_{2}\in\Sigma$ by integrating out
over all immediate random measures. We use an explicit stick-breaking
representation for $U$ where $U\sim\DP\left(\alpha\DP\left(vQ_{0}\right)\right)$
as follows 
\begin{align}
U & =\sum_{k=1}^{\infty}\pi_{k}\delta_{Q_{k}^{*}}\label{eq:U-stick}
\end{align}
where $\pi\sim\gempdf\left(\alpha\right)$ and $Q_{k}^{*}\iid\dirproc\left(vQ_{0}\right)$.
We use the notation $\delta_{Q_{k}^{*}}$ to denote the atomic measure
on measure, placing its mass at measure $Q_{k}^{*}$.

For $i=j$, we have: 
\begin{align*}
P\left(\varphi_{ik}\in A_{1},\varphi_{jk'}\in A_{2}\gv Q_{1},\ldots,Q_{J}\right) & =Q_{i}\left(A_{1}\right)Q_{i}\left(A_{2}\right)
\end{align*}
Sequentially take expectation over $Q_{i}$ and $U$: 
\begin{align*}
\int_{Q_{i}}Q_{i}\left(A_{1}\right)Q_{i}\left(A_{2}\right)dU\left(Q_{i}\right) & =\\
\int_{Q_{i}}Q_{i}\left(A_{1}\right)Q_{i}\left(A_{2}\right)d\left(\sum_{k=1}^{\infty}\pi_{k}\delta_{Q_{k}^{*}}\right) & =\\
\sum_{k}\pi_{k}\left[Q_{k}^{*}\left(A_{1}\right)Q_{k}^{*}\left(A_{2}\right)\right]\\
\int_{U}\sum_{k=1}^{\infty}\pi_{k}\left[Q_{k}^{*}\left(A_{1}\right)Q_{k}^{*}\left(A_{2}\right)\right]d\DP\left(U\gv\alpha\DP\left(vQ_{0}\right)\right) & =\\
\ess\left\{ \sum_{k}\pi_{k}\left[Q_{k}^{*}\left(A_{1}\right)Q_{k}^{*}\left(A_{2}\right)\right]\right\}  & =\\
\sum_{k}\ess\left[\pi_{k}\right]\ess\left[Q_{k}^{*}\left(A_{1}\right)Q_{k}^{*}\left(A_{2}\right)\right] & =\\
\frac{Q_{0}\left(A_{1}\cap A_{2}\right)+Q_{0}\left(A_{1}\right)Q_{0}\left(A_{2}\right)}{v\left(v+1\right)}\left(\sum_{k}\ess\left[\pi_{k}\right]\right) & =\\
\frac{Q_{0}\left(A_{1}\cap A_{2}\right)+Q_{0}\left(A_{1}\right)Q_{0}\left(A_{2}\right)}{v\left(v+1\right)}
\end{align*}
Integrating out $Q_{0}\sim\DP\left(vS\right)$ we get: 

\begin{align*}
P\left(\varphi_{ik}\in A_{1},\varphi_{jk'}\in A_{2}\gv\alpha,v,\eta,S\right) & =\\
\ESS[\frac{Q_{0}\left(A_{1}\cap A_{2}\right)+Q_{0}\left(A_{1}\right)Q_{0}\left(A_{2}\right)}{v\left(v+1\right)}]{Q_{0}\gv\eta,S} & =\\
\frac{1}{v\left(v+1\right)}\left\{ S\left(A_{1}\cap A_{2}\right)+\frac{S\left(A_{1}\cap A_{2}\right)+S\left(A_{1}\right)S\left(A_{2}\right)}{\eta\left(\eta+1\right)}\right\}  & =\\
\frac{S\left(A_{1}\cap A_{2}\right)}{v\left(v+1\right)}+\frac{S\left(A_{1}\cap A_{2}\right)+S\left(A_{1}\right)S\left(A_{2}\right)}{v\left(v+1\right)\eta\left(\eta+1\right)}
\end{align*}
For $i\neq j$, since $Q_{i}$ and $Q_{j}$ are conditionally independent
given $U$, we get:
\begin{align*}
P\left(\varphi_{ik}\in A_{1},\varphi_{jk'}\in A_{2}\gv Q_{1},\ldots,Q_{J}\right) & =Q_{i}\left(A_{1}\right)Q_{j}\left(A_{2}\right)
\end{align*}
Let $a_{k}=Q_{k}^{*}\left(A_{1}\right),b_{k}=Q_{k}^{*}\left(A_{2}\right)$
and using Definition (\ref{def:nDPM}), integrating out $U$ conditional
on $Q_{0}$ with the stick-breaking representation in Eq (\ref{eq:U-stick}):
\begin{align*}
P\left(\varphi_{ik}\in A_{1},\varphi_{jk'}\in A_{2}\gv vQ_{0}\right) & =\\
\left(\int_{U}Q_{i}\left(A_{1}\right)dU\right)\left(\int_{U}Q_{j}\left(A_{2}\right)dU\right) & =\\
\ess\left[\sum_{k}\pi_{k}Q_{k}^{*}\left(A_{1}\right)\right]\left[\sum_{k'}\pi_{k'}Q_{k'}^{*}\left(A_{2}\right)\right] & =\\
\ess\left(\pi_{1}a_{1}+\pi_{2}a_{2}+\ldots\right)\left(\pi_{1}b_{1}+\pi_{2}b_{2}+\ldots\right) & =\\
\ess\left(\sum_{k}\pi_{k}^{2}a_{k}b_{k}\right)+\ess\left(\sum_{k\neq k'}\pi_{k}\pi_{k'}a_{k}b_{k'}\right) & =\\
A\ess\left(\sum_{k}\pi_{k}^{2}\right)+B\ess\left(\sum_{k\neq k'}\pi_{k}\pi_{k'}\right) & =\\
A\sum_{k}\ess\left[\pi_{k}^{2}\right]+B\left(1-\sum_{k}\ess\left[\pi_{k}^{2}\right]\right)
\end{align*}
where
\begin{align*}
A & =\ess\left[a_{k}b_{k}\right]\\
 & =\ess\left[Q_{k}^{*}\left(A_{1}\right)Q_{k}^{*}\left(A_{2}\right)\right]\\
 & =\frac{Q_{0}\left(A_{1}\cap A_{2}\right)+Q_{0}\left(A_{1}\right)Q_{0}\left(A_{2}\right)}{v\left(v+1\right)}
\end{align*}
and since $Q_{k}^{*}$ (s) are iid draw from $\DP\left(vQ_{0}\right)$
we have:
\begin{align*}
B & =\ess\left[a_{k}b_{k'}\right]\\
 & =\ess\left[Q_{k}^{*}\left(A_{1}\right)Q_{k'}^{*}\left(A_{2}\right)\right]\\
 & =\ess\left[Q_{k}^{*}\left(A_{1}\right)\right]\ess\left[Q_{k'}^{*}\left(A_{2}\right)\right]\\
 & =Q_{0}\left(A_{1}\right)Q_{0}\left(A_{2}\right)
\end{align*}
Lastly, since $\left(\pi_{1},\pi_{2},\ldots\right)\sim\gempdf\left(\alpha\right)$,
using the property of its stick-breaking representation $\sum_{k}\ess\left[\pi_{k}^{2}\right]=\frac{1}{1+\alpha}$.
Put things together we obtain the expression for the correlation of
$\varphi_{ik}$ and $\varphi_{jk'}$ for $i\neq j$ conditional on
$Q_{0}$ as:
\begin{align*}
P\left(\varphi_{ik}\in A_{1},\varphi_{jk'}\in A_{2}\gv vQ_{0}\right) & =\\
\frac{Q_{0}\left(A_{1}\cap A_{2}\right)+Q_{0}\left(A_{1}\right)Q_{0}\left(A_{2}\right)}{\left(1+\alpha\right)v\left(v+1\right)}\\
+\frac{\alpha}{1+\alpha}Q_{0}\left(A_{1}\right)Q_{0}\left(A_{2}\right)\\
 & =\\
\frac{Q_{0}\left(A_{1}\cap A_{2}\right)}{\left(1+\alpha\right)v\left(v+1\right)}\\
+\frac{\alpha v\left(v+1\right)+1}{\left(1+\alpha\right)v\left(v+1\right)}Q_{0}\left(A_{1}\right)Q_{0}\left(A_{2}\right)
\end{align*}
 Next, integrating out $Q_{0}\sim\DP\left(vS\right)$ we get:
\begin{align*}
P\left(\varphi_{ik}\in A_{1},\varphi_{jk'}\in A_{2}\gv\alpha,v,\eta,S\right)=\\
\frac{\alpha v\left(v+1\right)+1}{\left(1+\alpha\right)v\left(v+1\right)}\ess\left[Q_{0}\left(A_{1}\right)Q_{0}\left(A_{2}\right)\right]\\
+\frac{\ess\left[Q_{0}\left(A_{1}\cap A_{2}\right)\right]}{\left(1+\alpha\right)v\left(v+1\right)}\\
=\\
\frac{\alpha v\left(v+1\right)+1}{\left(1+\alpha\right)v\left(v+1\right)}\frac{S\left(A_{1}\cap A_{2}\right)+S\left(A_{1}\right)S\left(A_{2}\right)}{\eta\left(\eta+1\right)}\\
+\frac{S\left(A_{1}\cap A_{2}\right)}{\left(1+\alpha\right)v\left(v+1\right)}
\end{align*}

\subsection{Model Inference Derivations\label{sub:Model-Inference-Derivations}}

We provide detailed derivations for model inference with the graphical
model displayed in Fig \ref{fig:J3}. The variables $\phi_{k}$, $\psi_{m}$,
$\pi$, $\tau_{k}$ are integrated out due to conjugacy property.
We need to sample these latent variables $z$, $l$, $\epsilon$ and
hyper parameters $\alpha$, $v$, $\eta$. For convenience of notation,
we denote $z_{-j}$ is a set of latent context variable $z$ in all
documents excluding document $j$, $\bl_{j*}$ is all of hidden variables
$l_{ji}$ in document $j$, and $\bl_{-j*}$ is all of $l$ in other
documents rather than document $j$-th.

\subsubsection*{Sampling $\bz$}

Sampling context index $z_{j}$ needs to take into account the influence
of the corresponding context topics:
\begin{align}
p(z_{j}=k\gv\mathbf{z}_{-j},\bl,\mathbf{x},\alpha,H) & \propto\underbrace{p\left(z_{j}=k\gv\mathbf{z}_{-j},\alpha\right)}_{\text{CRP for context topic}}\label{eq:samplingz}\\
 & \times\underbrace{p\left(x_{j}\gv z_{j}=k,\mathbf{z}_{-j},\mathbf{x}_{-j},H\right)}_{\text{context predictive likelihood}}\nonumber \\
 & \times\underbrace{p\left(\bl_{j*}\gv z_{j}=k,\bl_{-j*},\mathbf{z}_{-j},\epsilon,v\right)}_{\text{content latent marginal likelihood}}\nonumber 
\end{align}
The first term can easily be recognized as a form of Chinese Restaurant
Process (CRP):
\begin{align*}
p\left(z_{j}=k\gv\mathbf{z}_{-j},\alpha\right) & =\begin{cases}
\frac{n_{-j}^{k}}{n_{-j}^{*}+\alpha} & \textrm{if \ensuremath{k}\ensuremath{}old}\\
\frac{\alpha}{n_{-j}^{*}+\alpha} & \textrm{if \ensuremath{k}\ensuremath{}new}
\end{cases}
\end{align*}
where $n_{-j}^{k}$ is the number of data $z_{j}=k$ excluding $z_{j}$,
and $n_{-j}^{*}$ is the count of all \textbf{$\mathbf{z}$}, except
$z_{j}$.

The second expression is the predictive likelihood from the context
observations under the context component $\phi_{k}$. Specifically,
let $f\left(\cdot\gv\phi\right)$ and $h\left(\cdot\right)$ be respectively
the density function for $F\left(\phi\right)$ and $H$, the conjugacy
between $F$ and $H$ allows us to integrate out the mixture component
parameter $\phi_{k}$ , leaving us the conditional density of $x_{j}$
under the mixture component $k$ given all the context data items
exclude $x_{j}$:
\begin{align*}
p\left(x_{j}\gv z_{j}=k,\mathbf{z}_{-j},\mathbf{x}_{-j},H\right) & =\\
\frac{\int_{\phi_{k}}f\left(x_{j}\gv\phi_{k}\right)\underset{j'\neq j,z_{j'}=k}{\prod}f\left(x_{j'}\gv\phi_{k}\right)h\left(\phi_{k}\right)d\phi_{k}}{\int_{\phi_{k}}\underset{j'\neq j,z_{j'}=k}{\prod}f\left(x_{j'}\gv\phi_{k}\right)h\left(\phi_{k}\right)d\phi_{k}}\\
= & f_{k}^{-x_{j}}\left(x_{j}\right)
\end{align*}
Finally, the last term is the contribution from the multiple latent
variables of corresponding topics to that context. Since $l_{ji}\gv z_{j}=k\iid\multpdf\left(\btau_{k}\right)$
where $\btau_{k}\sim\dirpdf\left(v\epsilon_{1},\ldots,v\epsilon_{M},\epsilon_{\text{new}}\right)$,
we shall attempt to integrate out $\btau_{k}$ . Using the Multinomial-Dirichlet
conjugacy property we proceed to compute the last term in Eq (\ref{eq:samplingz})
as following:

\begin{align}
p\left(\bl_{j*}\gv z_{j}=k,\mathbf{z}_{-j},\bl_{-j*},\epsilon,v\right)= & \int_{\btau_{k}}p\left(\bl_{j*}\gv\btau_{k}\right)\label{eq:topic_context1}\\
\times p\left(\btau_{k}\gv\left\{ l_{j'*}\gv z_{j'}=k,j'\neq j\right\} ,\epsilon,v\right)d\btau_{k}\nonumber 
\end{align}
Recognizing the term $p\left(\btau_{k}\gv\left\{ \bl_{j'*}\gv z_{j'}=k,j'\neq j\right\} ,\bepsilon,v\right)$
is a posterior density, it is Dirichlet-distributed with the updated
parameters
\begin{align}
p\left(\btau_{k}\gv\left\{ \bl_{j'*}\gv z_{j'}=k,j'\neq j\right\} \right)\label{eq:topic_context2}\\
=\dirpdf\left(v\epsilon_{1}+c_{k,1}^{-j},\ldots,v\epsilon_{M}+c_{k,M}^{-j},v\epsilon_{\text{new}}\right)\nonumber 
\end{align}
where $c_{k,m}^{-j}=\sum_{j'\neq j}\sum_{i=1}^{N_{j'}}\id\left(l_{j'i}=m,z_{j'}=k\right)$
is the count of topic $m$ being assigned to context $k$ excluding
document $j$. Using this result, $p\left(\bl_{j*}\gv\btau_{k}\right)$
is a predictive likelihood for $\bl_{j*}$ under the posterior Dirichlet
parameters $\btau_{k}$ in Eq \ref{eq:topic_context2} and therefore
can be evaluated to be: 
\begin{align*}
 & p\left(\bl_{j*}\gv z_{j}=k,\mathbf{z}_{-j},\bl_{-j*},\epsilon,v\right)\\
= & \int_{\tau_{k}}p\left(\bl_{j*}\mid\btau_{k}\right)\\
 & \times\textrm{Dir}\left(v\epsilon_{1}+c_{k,1}^{-j},\ldots,v\epsilon_{M}+c_{k,M}^{-j},v\epsilon_{\text{new}}\right)d\btau_{k}\\
= & \int_{\tau_{k}}\prod_{m=1}^{M}\tau_{k,m}^{c_{k,m}^{j}}\times\frac{\Gamma\left(\sum_{m=1}^{M}\left(v\epsilon_{m}+c_{k,m}^{-j}\right)\right)}{\prod_{m=1}^{M}\Gamma\left(v\epsilon_{m}+c_{k,m}^{-j}\right)}\\
 & \times\prod_{m=1}^{M}\tau_{k,m}^{v\epsilon_{m}+c_{k,m}^{-j}-1}d\btau_{k}\\
= & \frac{\Gamma\left(\sum_{m=1}^{M}\left(v\epsilon_{m}+c_{k,m}^{-j}\right)\right)}{\prod_{m=1}^{M}\Gamma\left(v\epsilon_{m}+c_{k,m}^{-j}\right)}\int_{\tau_{k}}\prod_{m=1}^{M}\tau_{k,m}^{v\epsilon_{m}+c_{k,m}^{-j}+c_{k,m}^{j}-1}d\btau_{k}\\
= & \frac{\Gamma\left(\sum_{m=1}^{M}\left(v\epsilon_{m}+c_{k,m}^{-j}\right)\right)}{\prod_{m=1}^{M}\Gamma\left(v\epsilon_{m}+c_{k,m}^{-j}\right)}\\
 & \times\frac{\prod_{m=1}^{M}\Gamma\left(v\epsilon_{m}+c_{k,m}^{-j}+c_{k,m}^{j}\right)}{\Gamma\left(\sum_{m=1}^{M}\left(v\epsilon_{m}+c_{k,m}^{-j}+c_{k,m}^{j}\right)\right)}\\
= & \frac{\Gamma\left(\sum_{m=1}^{M}\left(v\epsilon_{m}+c_{k,m}^{-j}\right)\right)}{\Gamma\left(\sum_{m=1}^{M}\left(v\epsilon_{m}+c_{k,m}^{-j}\right)+N_{j}\right)}\\
 & \times\prod_{m=1}^{M}\frac{\Gamma\left(v\epsilon_{m}+c_{k,m}^{-j}+c_{k,m}^{j}\right)}{\Gamma\left(v\epsilon_{m}+c_{k,m}^{-j}\right)}\\
= & \begin{cases}
A=\frac{\Gamma\left(\sum_{m}\left[v\epsilon_{m}+c_{k,m}^{-j}\right]\right)}{\Gamma\left(\sum_{m}\left[v\epsilon_{m}+c_{k,m}\right]\right)}\prod_{m}\frac{\Gamma\left(v\epsilon_{m}+c_{k,m}\right)}{\Gamma\left(v\epsilon_{m}+c_{k,m}^{-j}\right)} & \text{ if }k\mbox{ \text{old}}\\
B=\frac{\Gamma\left(\sum_{m}v\epsilon_{m}\right)}{\Gamma\left(\sum_{m}v\epsilon_{m}+N_{j}\right)}\prod_{m}\frac{\Gamma\left(v\epsilon_{m}+c_{k,m}^{j}\right)}{\Gamma\left(v\epsilon_{m}\right)} & \text{ if }k\,\textrm{new}
\end{cases}
\end{align*}
note that $\epsilon=\left(\epsilon_{1},\epsilon_{2},...\epsilon_{M},\epsilon_{\textrm{new}}\right)$,
here $\epsilon_{1:M}=\left(\epsilon_{1},\epsilon_{2},...\epsilon_{M}\right)$,
when sampling $z_{j}$ we only use $M$ active components from the
previous iteration. In summary, the conditional distribution to sample
$z_{j}$ is given as:
\begin{align*}
p\left(z_{j}=k\gv\mathbf{z}_{-j},\bl,\mathbf{x},\alpha,H\right)\propto\\
\begin{cases}
n_{-j}^{k}\times f_{k}^{-x_{j}}\left(x_{j}\right)\times A & \text{ if }k\mbox{\text{ previousely used}}\\
\alpha\times f_{k_{\text{new}}}^{-x_{ji}}\left(x_{ji}\right)\times B & \text{ if }k=k_{\text{new}}
\end{cases}
\end{align*}
Implementation note: to evaluate A and B, we make use of the marginal
likelihood resulted from a Multinomial-Dirichlet conjugacy.

\subsubsection*{Sampling $l$}

Let $w_{-ji}$ be the same set as $w$ excluding $w_{ji}$, i.e $w_{-ji}=\left\{ w_{uv}:u\neq j\cap v\neq i\right\} $,
then we can write
\begin{align}
p\left(l_{ji}=m\mid l_{-ji},z_{j}=k,v,w,S\right)\propto\label{eq:sampling_l}\\
\underbrace{p\left(w_{ji}\mid w_{-ji},l_{ji}=m,\rho\right)}_{\textrm{content predictive likelihood}}\times\underbrace{p\left(l_{ji}=m\mid\bl_{-ji},z_{j}=k,\epsilon_{m},v\right)}_{\textrm{CRF for content topic}}\nonumber 
\end{align}
The first argument is computed as log likelihood predictive of the
content with the component $\psi_{m}$
\begin{align}
p\left(w_{ji}\mid w_{-ji},l_{ji}=m,\rho\right)=\label{eq:prob(tji) for prob(lji)}\\
\frac{\int_{\lambda_{m}}s\left(w_{ji}\mid\lambda_{m}\right)\left[\prod_{u\in w_{-ji}(m)}y(u\mid\lambda_{m})\right]s(\lambda_{m})d\lambda_{m}}{\int_{\lambda_{m}}\left[\prod_{u\in w_{-ji}(m)}y\left(u\gv\lambda_{m}\right)\right]s\left(\lambda_{m}\right)d\lambda_{m}}\nonumber \\
\das y_{m}^{-w_{ji}}\left(w_{ji}\right)\nonumber 
\end{align}
And the second term is inspired by Chinese Restaurant Franchise (CRF)
as:
\begin{align}
p\left(l_{ji}=m\mid\bl_{-ji},\epsilon_{m},v\right)= & \begin{cases}
c_{k,m}+v\epsilon_{m} & \textrm{if \ensuremath{m}\ensuremath{}old}\\
v\epsilon_{\textrm{new}} & \textrm{if }m\,\textrm{new}
\end{cases}\label{eq:CRP for prob(lji)}
\end{align}
where $c_{k,m}$ is the number of data point $\left|\left\{ l_{ji}|l_{ji}=m,z_{j}=k,1\le j\le J,1\le i\le N_{j}\right\} \right|$.
The final form to sample $l_{ji}$ is given as:

\begin{align*}
p\left(l_{ji}=m\mid\bl_{-ji},z_{j}=k,w,v,\epsilon\right)\propto\\
\begin{cases}
\left(c_{k,m}+v\epsilon_{m}\right)\times y_{m}^{-w_{ji}}\left(w_{ji}\right) & \textrm{if \ensuremath{m}\ensuremath{}is used previously}\\
v\epsilon_{\textrm{new}}\times y_{m}^{-w_{ji}}(w_{ji}) & \textrm{if }m=m_{\textrm{new}}
\end{cases}
\end{align*}
\textbf{Sampling} $\epsilon$

\begin{flushleft}
Note that sampling $\epsilon$ require both $z$ and $l$.
\par\end{flushleft}

\begin{align}
p\left(\epsilon\mid\bl,\mathbf{z},v,\eta\right) & \propto p\left(\bl\mid\epsilon,v,z,\eta\right)\times p\left(\epsilon\mid\eta\right)\label{eq:sampling epsilon-1}
\end{align}
Isolating the content variables $l_{ji}^{k}$ generated by the same
context $z_{j}=k$ into one group 

$l_{j}^{k}=\left\{ l_{ji}:1\leq i\leq N_{j},z_{j}=k\right\} $ the
first term of \ref{eq:sampling epsilon-1} can be expressed following:
\begin{align*}
p\left(l\mid\epsilon,v,z,\eta\right)= & \prod_{k=1}^{K}\int_{\tau_{k}}p\left(l_{**}^{k}\mid\tau_{k}\right)p\left(\tau_{k}\mid\epsilon\right)d\tau_{k}\\
= & \prod_{k=1}^{K}\frac{\Gamma(v)}{\Gamma\left(v+n_{k*}\right)}\prod_{m=1}^{M}\frac{\Gamma(v\epsilon_{m}+n_{km})}{\Gamma(v\epsilon_{m})}
\end{align*}
where $n_{k*}=\left|\left\{ w_{ji}\mid z_{j}=k,i=1,...N_{j}\right\} \right|$
and $n_{km}=\left|\left\{ w_{ji}\mid z_{j}=k,l_{ji}=m,1\leq j\leq J,1\leq i\leq N_{j},\right\} \right|$.

Let $\eta_{r}=\frac{\eta}{R}$, $\eta_{\text{new}}=\frac{R-M}{R}\eta$
and recall that $\epsilon\sim\dirpdf\left(\eta_{r},\ldots,\eta_{r},\eta_{\text{new}}\right)$,
the last term of Eq \ref{eq:sampling epsilon-1} is a Dirichlet density:
\begin{align*}
p\left(\bepsilon\mid\eta\right)= & \textrm{Dir}\left(\underbrace{\eta_{1},\eta_{2},...\eta_{M}}_{M},\eta_{\textrm{new}}\right)\\
= & \frac{\Gamma(M\times\eta_{r}+\eta_{\textrm{new}})}{[\Gamma(\eta_{r})]^{M}\eta_{\textrm{new}}}\prod_{m=1}^{M}\epsilon_{m}^{\eta_{r}-1}\epsilon_{\textrm{new}}^{\eta_{\textrm{new}}-1}
\end{align*}
Using the result: 
\begin{align*}
\frac{\Gamma(v\epsilon_{m}+n_{km})}{\Gamma(v\epsilon_{m})} & =\sum_{o_{km}=0}^{n_{km}}\textrm{Stirl}\left(o_{km},n_{km}\right)(v\epsilon_{m})^{o_{km}}
\end{align*}
Thus, Eq \ref{eq:sampling epsilon-1} becomes:

\begin{align*}
p\left(\bepsilon\mid\bl,\mathbf{z},v,\eta\right)= & \epsilon_{\textrm{new}}^{\eta_{\textrm{new}}-1}\prod_{k=1}^{K}\frac{\Gamma(v)}{\Gamma\left(v+n_{k*}\right)}\\
\times & \prod_{m=1}^{M}\epsilon_{m}^{\eta_{m}-1}\sum_{o_{km}=0}^{n_{km}}\textrm{Stirl}\left(o_{km},n_{km}\right)(v\epsilon_{m})^{o_{km}}\\
= & \epsilon_{\textrm{new}}^{\eta_{\textrm{new}}-1}\sum_{o_{km}=0}^{n_{km}}\prod_{k=1}^{K}\frac{\Gamma(v)}{\Gamma\left(v+n_{k*}\right)}\\
\times & \prod_{m=1}^{M}\epsilon_{m}^{\eta_{m}-1}\textrm{Stirl}\left(o_{km},n_{km}\right)(v\epsilon_{m})^{o_{km}}\\
p\left(\bepsilon,\bo\mid\bl,\mathbf{z},v,\eta\right)= & \epsilon_{\textrm{new}}^{\eta_{\textrm{new}}-1}\prod_{k=1}^{K}\frac{\Gamma(v)}{\Gamma\left(v+n_{k*}\right)}\\
\times & \prod_{m=1}^{M}\epsilon_{m}^{\eta_{m}-1}\textrm{Stirl}\left(o_{km},n_{km}\right)(v\epsilon_{m})^{o_{km}}
\end{align*}
The probability of the auxiliary variable $o_{km}$ is computed as:

\begin{align*}
p(o_{km}) & =\sum_{o_{km}=0}^{n_{km}}\textrm{Stirl}\left(o_{km},n_{km}\right)(v\epsilon_{m})^{o_{km}}
\end{align*}
Now let $o=\left(o_{km}:\forall k,m\right)$ we derive the following
joint distribution:

\begin{align*}
p\left(\bepsilon\mid o,\bl,\mathbf{z},v,\eta\right) & =\epsilon_{\textrm{new}}^{\eta_{\textrm{new}}-1}\prod_{m=1}^{M}\epsilon_{m}^{\sum_{K}o_{km}+\eta_{m}-1}
\end{align*}
As $R\rightarrow\infty$, we have

\begin{align*}
p\left(\bepsilon\mid\bo,\bl,\mathbf{z},v,\eta\right) & \stackrel{\infty}{=}\epsilon_{\textrm{new}}^{\eta-1}\prod_{m=1}^{M}\epsilon_{m}^{\sum_{K}o_{km}-1}
\end{align*}
Finally, we sample $\epsilon$ jointly with the auxiliary variable
$o_{km}$ by:

\subsection*{
\begin{align*}
p\left(o_{km}=h\gv\cdot\right) & \propto\textrm{Stirl}\left(h,n_{km}\right)(v\epsilon_{m})^{h},\, h=0,1,\ldots,n_{km}\protect\\
p(\epsilon) & \propto\epsilon_{\text{new}}^{\eta-1}\prod_{m=1}^{M}\epsilon_{m}^{\sum_{K}o_{km}-1}
\end{align*}
Sampling hyperparameters}

In the proposed model, there are three hyper-parameters which need
to be sampled : $\alpha,v$ and $\eta$.

\subsubsection*{Sampling $\eta$ }

Using similar strategy and using technique from Escobar and West \cite{Escobar_West_95bayesian},
we have

\begin{align*}
p\left(M\gv\eta,u\right) & =\stir\left(M,u\right)\eta^{M}\frac{\Gamma\left(\eta\right)}{\Gamma\left(\eta+u\right)}
\end{align*}
where $u=\sum_{m}u_{m}$ with $u_{m}=\sum_{K}o_{km}$ is in the previous
sampling $\epsilon$ and $M$ is the number of active content atoms.
Let $\eta\sim\gammapdf\left(\eta_{1},\eta_{2}\right)$. Recall that:
\begin{align*}
\frac{\Gamma\left(\eta\right)}{\Gamma\left(\eta+u\right)} & =\int_{0}^{1}t^{\eta}\left(1-t\right)^{u-1}\left(1+\frac{u}{\eta}\right)dt
\end{align*}
that we have just introduced an auxiliary variable $t$

\begin{align*}
p\left(t\gv\eta\right) & \propto t^{\eta}\left(1-t\right)^{u-1}=\betapdf\left(\eta+1,u\right)
\end{align*}
Therefore,
\begin{align}
p\left(\eta\gv t\right) & \propto\eta^{\eta_{1}-1+M}\exp\left\{ -\eta\eta_{2}\right\} \times t^{\eta}\left(1-t\right)^{u-1}\left(1+\frac{u}{\eta}\right)\nonumber \\
 & =\eta^{\eta_{1}-1+M}\times\exp\left\{ -\eta(\eta_{2}-\log t)\right\} \times\left(1-t\right)^{u-1}\nonumber \\
 & +\eta^{\eta_{1}-1+M-1}\exp\left\{ -\eta(\eta_{2}-\log t)\right\} \times\left(1-t\right)^{u-1}u\nonumber \\
 & \propto\eta^{\eta_{1}-1+M}\exp\left\{ -\eta(\eta_{2}-\log t)\right\} \nonumber \\
 & +u\eta^{\eta_{1}-1+M-1}\exp\left\{ -\eta(\eta_{2}-\log t)\right\} \nonumber \\
 & =\pi_{t}\gammapdf\left(\eta_{1}+M,\eta_{2}-\log t\right)\label{eq:sampling_eta}\\
 & +\left(1-\pi_{t}\right)\gammapdf\left(\eta_{1}+M-1,\eta_{2}-\log t\right)\nonumber 
\end{align}
where $\pi_{t}$ satisfies this following equation to make the above
expression a proper mixture density: 
\begin{align}
\frac{\pi_{t}}{1-\pi_{t}} & =\frac{\eta_{1}+M-1}{u\left(\eta_{2}-\log t\right)}\label{eq:compute_pi_t}
\end{align}
To re-sample $\eta$, we first sample $t\sim\betapdf\left(\eta+1,u\right)$,
compute $\pi_{t}$ as in equation \ref{eq:compute_pi_t}, and then
use $\pi_{t}$ to select the correct Gamma distribution to sample
$\eta$ as in Eq. \ref{eq:sampling_eta}.

\subsubsection*{Sampling $\alpha$ }

Again sampling $\alpha$ is similar to Escobar et al \cite{Escobar_West_95bayesian}.
Assuming $\alpha\sim\gammapdf\left(\alpha_{1},\alpha_{2}\right)$
with the auxiliary variable $t$:

\begin{align*}
p\left(t\mid\alpha,K\right)\propto & t^{\alpha_{1}}\left(1-t\right)^{J-1}\\
p\left(t\mid\alpha,K\right)\propto & \textrm{Beta}\left(\alpha_{1}+1,J\right)
\end{align*}
$J$: number of document

\begin{align*}
p\left(\eta\mid t,K\right)\sim & \pi_{t}\textrm{Gamma}\left(\alpha_{1}+K,\alpha_{2}-\log(t)\right)\\
 & +(1-\pi_{t})\textrm{Gamma}\left(\alpha_{1}+K-1,\alpha_{2}-\log(t)\right)
\end{align*}
where $c,d$ are prior parameter for sampling $\eta$ following Gamma
distribution and $\frac{\pi_{t}}{1-\pi_{t}}=\frac{\alpha_{1}+K-1}{J\left(\alpha_{2}-\log t\right)}$

\subsubsection*{Sampling $v$}

Sampling $v$ is similar to sampling concentration parameter in HDP
\cite{Teh_etal_06hierarchical}. Denote $o_{k*}=\sum_{m}o_{km}$,
where $o_{km}$ is defined previously during the sampling step for
$\bepsilon$, $n_{k*}=\sum_{m}n_{km}$, where $n_{km}$ is the count
of $\left|\left\{ l_{ji}\mid z_{ji}=k,l_{ji}=m\right\} \right|$.
Using similar technique in \cite{Teh_etal_06hierarchical}, we write:

\begin{align*}
p\left(o_{1*},o_{2*}..,o_{K*}\mid v,n_{1*},...n_{K*}\right)= & \prod_{k=1}^{K}\stir(n_{k*},o_{k*})\alpha_{0}^{o_{k*}}\\
 & \times\frac{\Gamma(v)}{\Gamma\left(v+n_{k*}\right)}
\end{align*}
where the last term can be expressed as

\begin{align*}
\frac{\Gamma(v)}{\Gamma\left(v+n_{k*}\right)}= & \frac{1}{\Gamma(n_{k*})}\int_{0}^{1}b_{k}^{v}\left(1-b_{k}\right)^{n_{k*}-1}\left(1+\frac{n_{k*}}{v}\right)db_{k}
\end{align*}
Assuming $v\sim\gammapdf\left(v_{1},v_{2}\right)$, define the auxiliary
variables $b=\left(b_{k}\mid k=1,\ldots,K\right),b_{k}\in[0,1]$ and
$t=\left(t_{k}\mid k=1,\ldots,K\right),t_{k}\in\left\{ 0,1\right\} $
we have

\begin{align*}
q\left(v,b,t\right) & \propto v^{v_{1}-1+\sum_{k}M_{k}}\exp\left\{ -vv_{1}\right\} \\
 & \times\prod_{k=1}^{K}b_{k}^{v}\left(1-b_{k}\right)^{M_{k}-1}\left(\frac{M_{k}}{v}\right)^{t_{k}}
\end{align*}
We will sample the auxiliary variables $b_{k}$, $t_{k}$ in accordance
with $v$ that are defined below:

\begin{align*}
q(b_{k}\mid v)= & \textrm{Beta}\left(v+1,o_{k*}\right)\\
q\left(t_{k}\mid.\right)= & \textrm{Bernoulli}\left(\frac{o_{k*}/v}{1+o_{k*}/v}\right)\\
q(v\mid.)= & \textrm{Gamma}\left(v_{1}+\sum_{k}\left(o_{k*}-t_{k}\right),v_{2}-\sum_{k}\log b_{k}\right)
\end{align*}

\subsection{Relative Roles of Context and Content Data\label{sub:Relative-Roles-of}}

Regarding the inference of the cluster index $z_{j}$ (Eq. \ref{eq:samplingz}),
to obtain the marginal likelihood (the third term in Eq. \ref{eq:samplingz})
one has to integrate out the words' topic labels $l_{ji}$. In doing
so, it can be shown that the \emph{sufficient} statistics coming from
the content data toward the inference of the topic frequencies and
the clustering labels will just be the empirical word frequency from
each document. As each document becomes sufficiently long, the empirical
word frequency quickly concentrates around its mean by the central
limit theorem (CLT), so as soon as the effect of CLT kicks in, increasing
document length further will do very little in improving this sufficient
statistics. 

Increasing the document length will probably not hurt, of course.
But to what extent it contributes relative to the number of documents
awaits a longer and richer story to be told. 

We confirm this argument by varying the document length and the number
of documents in the synthetic document and see how they affect the
\emph{posterior} of the clustering labels. Each experiment is repeated
20 times. We record the mean and standard deviation of the clustering
performance by NMI score. As can be seen from Fig \ref{fig:ClusteringPerformance_VaryingDocumentLength},
using context observation makes the model more robust in recovering
the true document clusters.

\begin{figure*}
\begin{centering}
\includegraphics[width=1.8\columnwidth]{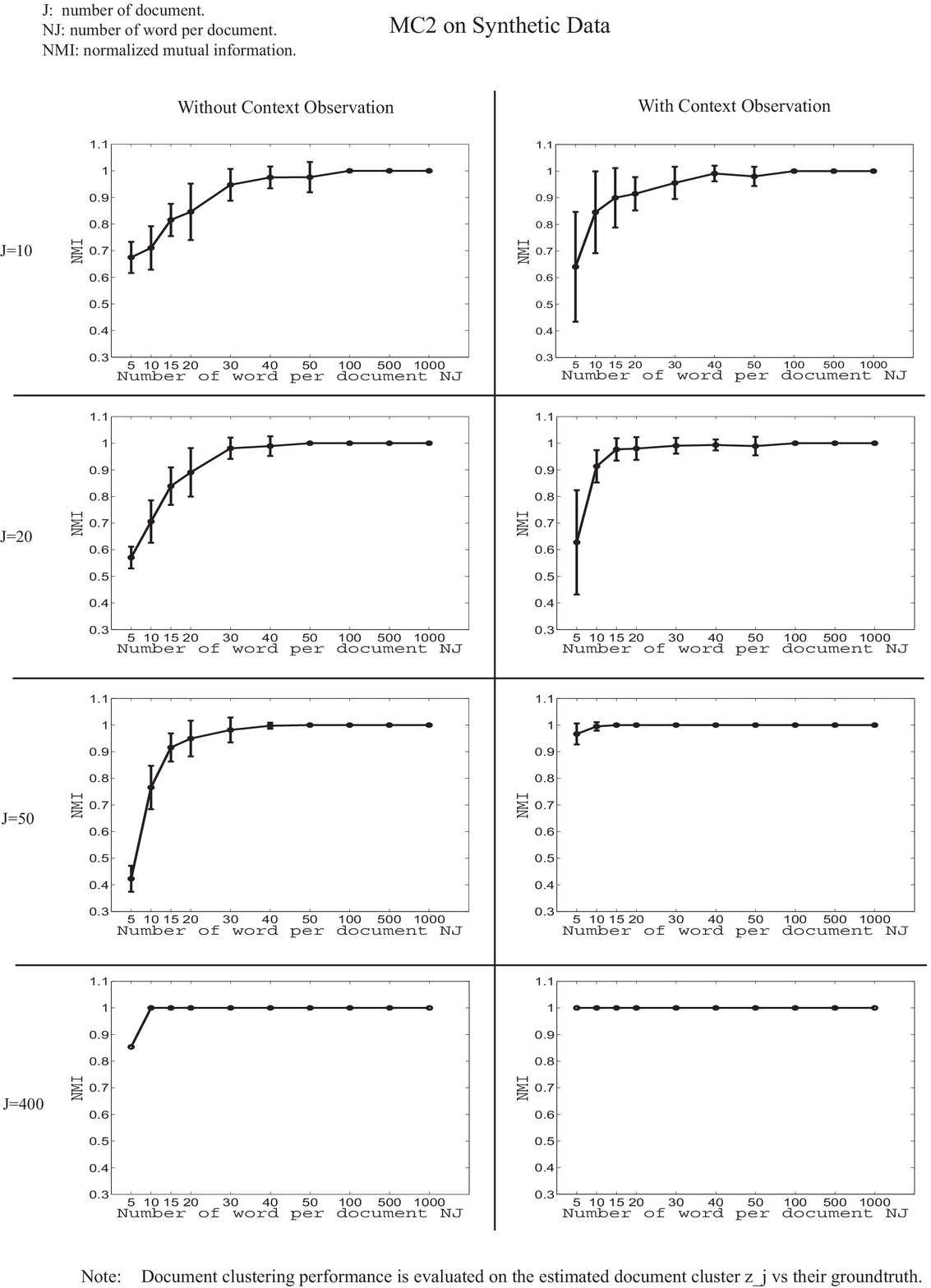}
\par\end{centering}

\caption{Document clustering performance with different numbers of observed
words and documents.\label{fig:ClusteringPerformance_VaryingDocumentLength}}

\end{figure*}

\subsection{Perplexity Evaluation\label{sub:Perplexity-Evaluation}}

The standard perplexity proposed by Blei et al \cite{Blei_etal_03},
used to evaluate the proposed model as following:

\begin{align*}
\textrm{perplexity}\left(w^{\textrm{Test}}\right) & =\exp\left\{ -\frac{\sum_{j=1}^{J_{\textrm{Test}}}\log p\left(w_{j}^{\textrm{Test}}\right)}{\sum_{j=1}^{J_{\textrm{Test}}}N_{j}^{\textrm{Test}}}\right\} 
\end{align*}
During individual sampling iteration $t$, we utilize the important
sampling approach \cite{teh2006collapsed} to compute $p\left(w_{\textrm{Test}}\right)$.
The posterior estimation of $\psi_{m}$ in a Multinomial-Dirichlet
case is defined below, note that it can be in other types of conjugacies
\cite{gelman2003bayesian} (e.g. Gaussian-Wishart, Binomial-Poisson):

\begin{align*}
\psi_{m,v}^{t} & =\frac{n_{m,v}^{t}+\textrm{smooth}}{\sum_{u=1}^{V}n_{m,v}^{t}+V\times\textrm{smooth}}
\end{align*}

\begin{align*}
\tau_{k,m}^{t} & =\frac{c_{k,m}+vv\times\epsilon_{m}}{\sum_{m=1}^{M}\left(c_{k,m}+vv\times\epsilon_{m}\right)}
\end{align*}
where $n_{m,v}^{t}$ is number of times a word $v$, $v\in\left\{ 1,...,V\right\} $
is assigned to context topic $\psi_{m}$ in iteration $t$, and $c_{k,m}$
is the count of the set $\left\{ w_{ji}\mid z_{j}=k,l_{ji}=m,0\le j\le J,0\le i\le N_{j}\right\} $.
There is a constant smooth parameter \cite{AsuWelSmy2009a} that influence
on the count, roughly set as 0.1. Supposed that we estimate $z_{j}^{\textrm{Test}}=k$
and $l_{ji}^{\textrm{Test}}=m$, then the probability $p\left(w_{j}^{\textrm{Test}}\right)$
is computed as:

\begin{align*}
p\left(w_{j}^{\textrm{Test}}\right)=\prod_{i=1}^{N_{j}^{\textrm{Test}}}\frac{1}{T}\sum_{t=1}^{T} & \tau_{k,m}^{t}\psi_{m,w_{ji}^{\textrm{Test}}}^{t}
\end{align*}
where T is the number of collected Gibbs samples.

\end{document}